\documentclass{article}

\usepackage[nonatbib,preprint]{neurips_2026}

\usepackage{graphicx}
\usepackage{subcaption}
\usepackage{multirow}
\usepackage[table]{xcolor}

\definecolor{headerGray}{RGB}{245,245,245}
\usepackage[utf8]{inputenc} %
\usepackage[T1]{fontenc}    %
\usepackage{hyperref}       %
\usepackage{url}            %
\usepackage{booktabs}       %
\usepackage{amsfonts}       %
\usepackage{nicefrac}       %
\usepackage{microtype}      %
\usepackage{xcolor}         %
\usepackage{amsmath}
\usepackage{amssymb}
\usepackage{mathtools}
\usepackage{amsthm}

\usepackage{amsmath,amsfonts,bm}

\def\eqref#1{equation~\ref{#1}}

\def\1{\bm{1}}

\DeclareMathAlphabet{\mathsfit}{\encodingdefault}{\sfdefault}{m}{sl}
\SetMathAlphabet{\mathsfit}{bold}{\encodingdefault}{\sfdefault}{bx}{n}

\DeclareMathOperator*{\argmin}{arg\,min}

\usepackage{subcaption,booktabs,multirow}
\usepackage{adjustbox}
\usepackage{hyperref}
\usepackage{url}
\usepackage{url}
\usepackage{amsthm}
\usepackage{listings}
\usepackage{algorithm}
\usepackage{multirow}
\usepackage{algorithmic}
\usepackage{hyperref}       %
\usepackage{url}   
\usepackage{booktabs}
\usepackage{amsfonts}
\usepackage{nicefrac}  
\usepackage{comment} %
\usepackage[numbers]{natbib}
\usepackage{multirow}
\usepackage{float}
\usepackage{booktabs,siunitx,threeparttable,makecell,xcolor,colortbl}
\usepackage[colorinlistoftodos,prependcaption,textsize=small]{todonotes}

\newcommand{\se}[1]{{\scriptsize\textcolor{gray}{#1}}}

\newtheorem{thm}{Theorem}[section]

\usepackage{cleveref}
\crefname{thm}{Theorem}{Theorems}

\theoremstyle{plain}
\newtheorem{theorem}{Theorem}[section]

\newtheorem{lemma}[theorem]{Lemma}

\theoremstyle{definition}

\newtheorem{assumption}[theorem]{Assumption}
\theoremstyle{remark}
\newtheorem{remark}[theorem]{Remark}

\title{MASS-DPO: Multi-negative Active Sample Selection for Direct Policy Optimization}

\author{
Rohan Surana$^{1}$, Xintong Li$^{1}$, Sheldon Yu$^{1}$, Yiran Jenny Shen$^{1}$, Chuhan Wang$^{1}$, Tong Yu$^{2}$, \\
\textbf{Prithviraj Ammanabrolu$^{1}$, Jingbo Shang$^{1}$, Julian McAuley$^{1}$, Junda Wu$^{1}$}
\\
$^{1}$UC San Diego \quad
$^{2}$Adobe Research \\
\texttt{\{rsurana,xil240,ziy040,jes038,chw136,prithvi,jshang,jmcauley,juw069\}@ucsd.edu} \\
\texttt{tyu@adobe.com} \\
}

\begin{document}

\maketitle

\begin{abstract}

Multi-negative preference optimization under the Plackett--Luce (PL) model extends Direct Preference Optimization (DPO) by leveraging comparative signals across one preferred and multiple rejected responses. However, optimizing over large negative pools is costly, and many candidates contribute redundant gradients due to their similar effects on policy updates. 
We introduce MASS-DPO, a multi-negative active sample selection method that derives a PL-specific Fisher-information objective for selecting compact, informative negative subsets within each prompt. The resulting log-determinant objective selects negatives that contribute complementary information for policy updates, yielding compact subsets that retain the full pool's information while reducing redundancy. In practice, this favors negatives whose gradients cover different update directions, reducing redundant signal from near-duplicate candidates while preserving the most useful training information.
Across four benchmarks spanning recommendation and multiple-choice QA and three model families, MASS-DPO consistently exceeds or matches existing methods in accuracy, improves Recall/NDCG and margin-based optimization dynamics, and delivers stronger alignment with substantially fewer negatives.

\end{abstract}

\section{Introduction}

Direct Preference Optimization (DPO)~\citep{rafailov2024direct} aligns models with human preferences by optimizing pairwise comparisons without constructing reward functions~\citep{10.5555/3294996.3295184, ouyang2022training, stiennon2020learning}. Recent work generalizes DPO with the Plackett--Luce (PL) model~\citep{plackett1975analysis,luce1959individual,huang2026listwise,wucontext,xiesurvey} to compare one preferred response against multiple rejected responses, providing richer supervision. 
However, current multi-negative approaches such as Softmax-DPO (S-DPO)~\citep{chen2024softmax} and Direct Multi-Preference Optimization (DMPO)~\citep{bai2024finetuning} typically sample or weight negatives randomly or heuristically. In large candidate pools, this can devote much of the training signal to near-duplicate negatives whose gradients point in similar directions, increasing computation without proportionally improving policy updates.

To address this bottleneck, we propose MASS-DPO (\textbf{M}ulti-negative \textbf{A}ctive \textbf{S}ample \textbf{S}election for \textbf{D}irect \textbf{P}reference \textbf{O}ptimization), an active negative selection framework derived from the multi-negative PL preference objective.
MASS-DPO formulates negative selection as a D-optimal design problem~\citep{pukelsheim2006optimal,kiefer1959optimum}, using a PL-specific Fisher-information objective to measure how much each candidate contributes to policy estimation~\citep{doi:10.1098/rsta.1922.0009,chaloner1995bayesian,flaherty2005robust,kirsch2019batchbald}. We favor D-optimality over alternatives such as A- or E-optimality because maximizing log-determinant information minimizes the volume of the joint confidence ellipsoid, promoting coverage across parameter directions rather than emphasizing a single mode~\citep{kiefer1959optimum,pukelsheim2006optimal}.
Without careful selection, the model can repeatedly update along already-covered directions, leading to poor parameter coverage and inefficient optimization.
MASS-DPO addresses this by selecting negatives that span complementary directions in parameter space, as determined by the D-optimal design formulation.

While D-optimal design provides a principled Fisher-information criterion for prioritizing preference data, most prior work applies it at the \emph{instance level}: selecting which preference samples, prompts, query distributions, or annotators to acquire or retain~\cite{kveton2025active,liu2024dual,das2024active,mukherjee2024optimal}.
MASS-DPO instead applies optimal design \emph{within each prompt}. Given a shared pool of candidate negatives for a preferred response, it selects a compact subset tailored to the multi-negative PL objective using our curvature/Fisher characterization~(\Cref{lemma:hessian}).

The resulting subset selection problem is combinatorial when the candidate pool is large~\citep{krause2012near,kirsch2019batchbald,ni2026survey,huang2025image}. We make it practical with an incremental rank-one procedure that builds the subset one negative at a time using marginal log-determinant gains. Sherman--Morrison updates avoid repeated determinant/inverse recomputation, making the log-determinant objective efficient to optimize~\citep{sener2017active,kirsch2019batchbald,kveton2025active,mundada2026ws}.

Empirically, we show that MASS-DPO improves optimization efficiency and downstream performance across three model families and four recommendation/QA benchmarks, while using substantially fewer negatives.
We summarize our contributions as follows:
\begin{itemize}

    \item We introduce MASS-DPO, a within-prompt active negative selection framework for multi-negative DPO, derived from the Plackett--Luce objective and its Fisher/curvature structure.

    \item We provide an incremental rank-one selection algorithm for efficient log-determinant optimization and establish finite-sample relative-logit error bounds.    
    \item Empirically, MASS-DPO improves optimization efficiency and downstream performance across three language model families and four recommendation/QA benchmarks.
\end{itemize}

\begin{figure}[htp]
\centering
\includegraphics[width=\textwidth]{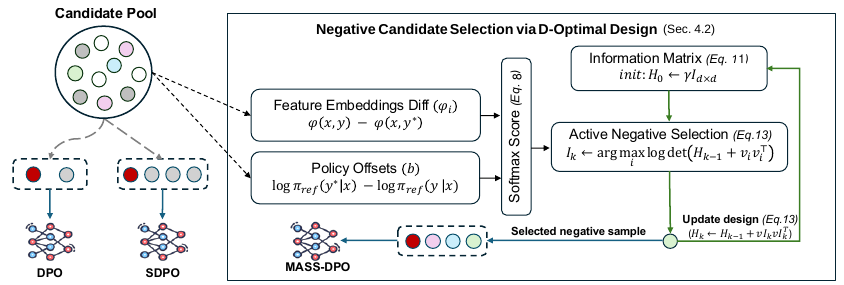}
\caption{Overview of MASS-DPO's D-optimal selection. Each candidate is scored using the feature difference $\phi_i=\phi(x,y_i)-\phi(x,y^*)$ and policy offset $b_i=\log\pi_{\rm ref}(y^*\mid x)-\log\pi_{\rm ref}(y_i\mid x)$, with softmax weights defined in~\Cref{eq:softmax-weights}. The \textcolor{green}{green loop} denotes the subset-construction step in~\Cref{alg:dopt}: starting from $H_0$, we incrementally pick the negative that maximally increases $\log\det H$, then update $H$ accordingly until $n$ samples are selected.}
\label{fig:fig1}

\end{figure}
\section{Related Work}
\textbf{Direct Preference Optimization. }
DPO~\citep{rafailov2024direct} aligns language models with human preferences by optimizing likelihood ratios of preferred over dispreferred responses, avoiding explicit reward modeling and associated complexities such as reward misgeneralization in RLHF~\citep{10.5555/3294996.3295184, ouyang2022training, stiennon2020learning,surana2026generate,huang2026amps,wu2025ocean}. Recent extensions include dynamic margins (ODPO;~\citep{amini2024direct}) and prefix sharing for computational efficiency~\citep{wang2024accelerating}. However, standard DPO is restricted to binary preference pairs, limiting the diversity of supervision~\citep{wang2026scenealign}. Our approach extends beyond binary comparisons by leveraging actively selected, informative multi-negative samples.

\textbf{Multi-negative Preference Optimization. }
Recent work has extended standard DPO's binary preference pairs to leverage multiple negatives for richer comparative signals and enhanced alignment. Softmax-DPO (S-DPO)~\citep{chen2024softmax} generalizes the pairwise Bradley–Terry loss~\citep{19ff28b9-64f9-3656-ba40-08326a05748e} to Plackett–Luce ranking~\citep{plackett1975analysis,luce1959individual,huang2025pluralistic}, providing richer gradient signals. Direct Multi-Preference Optimization (DMPO)~\citep{bai2024finetuning} averages over multiple negatives to promote diverse negative learning. Multi Pair-wise Preference Optimization (MPPO)~\citep{xie2024mppo} extends DPO by directly modeling multi-negative feedback with average-likelihood loss, removing the need for a reference model and enabling flexible use of negative samples. Tree Preference Optimization (TPO)~\citep{liao2024tpo} structures multi-negative alignment through hierarchical preference decomposition. Despite these advances in multi-negative preference optimization, current methods still largely depend on heuristic or random negative selection strategies. Our work addresses this limitation by proposing MASS-DPO, which leverages D-optimal design for theoretically grounded, strategic negative sample selection.

\textbf{Information-theoretic sample selection and optimal design.}
A broad literature in optimal experimental design and active learning selects informative data by maximizing information about model parameters, with D-optimality (maximizing $\log\det$ of the Fisher information) as a standard criterion~\citep{kiefer1959optimum,chaloner1995bayesian,pukelsheim2006optimal}. In modern batch active learning, related objectives are used to promote coverage/diversity in representation or Fisher/gradient space~\citep{sener2017active,kirsch2019batchbald,ash2019deep}. In preference optimization, recent work adopts such principles primarily at the \emph{instance level}---selecting which prompts/comparisons (and, in some settings, teachers) to acquire or retain for training~\citep{kveton2025active,liu2024dual,das2024active,mukherjee2024optimal,huang2025traceable,wu2023infoprompt}.
MASS-DPO instead applies D-optimal design \emph{within each prompt}: given a shared pool of negative candidates, we select a small subset tailored to the \emph{multi-negative} Plackett--Luce objective via our PL-specific curvature/Fisher characterization.
Unlike online hard-negative mining and dynamic sampling strategies~\citep{han-etal-2024-efficient,zhan2021optimizingdenseretrievalmodel,fan2023neighborhood,ma2024negative,li2026importance}, which recompute candidates at every training step and often rely on task-specific heuristics, MASS-DPO operates in a fixed-budget regime: negatives are selected once as a preprocessing step and remain fixed during training, incurring no per-step mining cost. The Fisher-information criterion selects negatives spanning complementary directions in parameter space, a geometric property that persists across training (\Cref{tab:selection-stability}).

\section{Preliminaries}

\subsection{Direct Preference Optimization}

Direct Preference Optimization (DPO)~\citep{rafailov2024direct} aligns a learned policy with human pairwise judgments~\citep{10.5555/3294996.3295184, 10.5555/3495724.3495977, ouyang2022training} without an explicit reward model. 
Under the Bradley-Terry-Luce framework~\citep{19ff28b9-64f9-3656-ba40-08326a05748e}, two responses $y_1,y_2$ to prompt~$x$ with latent scores $r(x,y_1),r(x,y_2)$ satisfy
\begin{equation}
\label{eq:dpo-pref}
p^*(y_1\succ y_2\mid x) = \sigma(r(x,y_1)-r(x,y_2)),
\end{equation}
where $\sigma(z)=1/(1+e^{-z})$. 
Rearranging the optimal-policy relation gives the implicit reward representation up to an \(x\)-dependent additive constant \(\beta \log \mathcal Z(x)\):
\begin{equation}
\label{eq:dpo-adv}
\begin{aligned}
r(x,y)
&= \beta\,\log\!\frac{\pi^*(y\mid x)}{\pi_{\mathrm{ref}}(y\mid x)}
   + \beta\log \mathcal{Z}(x), \\
\mathcal{Z}(x)
&= \sum_{y'}\pi_{\mathrm{ref}}(y'\mid x)\ \cdot\exp\!\Bigl(\tfrac{1}{\beta}\,r(x,y')\Bigr).
\end{aligned}
\end{equation}

Substituting \eqref{eq:dpo-adv} into \eqref{eq:dpo-pref} and simplifying leads to the DPO training objective
\begin{equation}
\begin{aligned}
\mathcal{L}_{\mathrm{DPO}}(\theta)
= -\mathbb{E}_{(x,y_1,y_2)\sim D}\Bigl[
\log\sigma\Bigl(
&\beta\Bigl[
\log\!\tfrac{\pi_\theta(y_1\mid x)}{\pi_{\mathrm{ref}}(y_1\mid x)}
- \log\!\tfrac{\pi_\theta(y_2\mid x)}{\pi_{\mathrm{ref}}(y_2\mid x)}
\Bigr]
\Bigr)\Bigr].
\end{aligned}
\end{equation}

\subsection{Multi-negative Preference Optimization}  
Multi-negative preference optimization generalizes the Direct Preference Optimization framework~\citep{rafailov2024direct} to better align language models with multiple negative preferences.
While traditional DPO employs the Bradley-Terry (BT) model~\citep{19ff28b9-64f9-3656-ba40-08326a05748e} to capture pairwise comparisons, 
multi-negative preference optimization leverages the Plackett-Luce (PL) model~\citep{plackett1975analysis,luce1959individual} to accommodate the ranking of a preferred item against multiple disfavored items.

Consider a user prompt $x_u$ that is formed from historical interactions, along with a preferred item $e_p$ and a set of dispreferred items $E_d$. 
The aim is to maximize the probability that the preferred item $e_p$ is ranked above every item in $E_d$, as described by  
\begin{equation} \label{eq:pl}
p^*(e_p \succ E_d \mid x_u) = \frac{\exp(r(x_u, e_p))}{\sum_{e_d \in \{e_p\} \cup E_d} \exp(r(x_u, e_d))},
\end{equation}  
where $r(x_u, e)$ is the latent reward function defined over the prompt-response pairs in the RLHF framework~\citep{ouyang2022training}. 
From \Cref{eq:pl}, we obtain the following multi-negative preference loss:
\begin{equation}
\label{eq:dpo-loss}
\begin{aligned}
\mathcal{L}(\theta)
&= -\mathbb{E}_{(x_u,e_p,E_d)\sim D}\Bigl[
\log \sigma\Bigl(
-\log \sum_{e_d\in E_d}\exp\bigl(\beta\,\Delta(x_u,e_d,e_p)\bigr)
\Bigr)\Bigr]
\end{aligned}
\end{equation}
with $\sigma(\cdot)$ denoting the sigmoid function and $
\Delta(x_u,e_d,e_p) =
\log\!\frac{\pi_\theta(e_d \mid x_u)}{\pi_{\mathrm{ref}}(e_d \mid x_u)}
- \log\!\frac{\pi_\theta(e_p \mid x_u)}{\pi_{\mathrm{ref}}(e_p \mid x_u)}$. When $|E_d|=1$, this reduces to the standard pairwise DPO objective.

\section{MASS-DPO: Multi-negative Active Sample Selection}
\label{sec:mdpo}

In multi-negative preference optimization tasks (\emph{e.g.}, recommendation, multiple-choice QA, information retrieval), the selection of negative samples significantly influences alignment efficiency and effectiveness. \emph{Uninformative} negatives, already well-separated from preferred responses, waste gradient computations and hinder convergence~\citep{YANG2023119155,kalantidis2020hard,robinson2020contrastive,zhang2022false}. Thus, the key challenge is strategically selecting a compact yet informative subset of negatives to highlight the policy’s weaknesses while maintaining numerical stability~\citep{ma2024negative,kirsch2022unifying,fan2023neighborhood}.
To address this, we propose MASS-DPO~(\Cref{fig:fig1}), an active negative selection method formulated as a D-optimal design problem~\citep{pukelsheim2006optimal,cohn1993neural,kirsch2019batchbald}, maximizing a Fisher-information objective~\citep{doi:10.1098/rsta.1922.0009,jung2021optimal,liu2024dual,neilsen2018optimal,sourati2017asymptotic,chaloner1995bayesian,ash2021gone}. By maximizing this objective, MASS-DPO minimizes the volume of the confidence ellipsoid of policy parameters, connecting computational efficiency with statistical guarantees (Section~\ref{sec:theory}).

\begin{algorithm}[htp]
\small
\caption{D‑Optimal Multi‑negative Active Sample Selection}
\label{alg:dopt}
\begin{algorithmic}[1]
\STATE \textbf{Input:} context $x$, preferred response $y^*$, candidate set $\mathcal C=\{y_i\}_{i=1}^N$, preprocessing parameter $\theta_0$, scale $\beta$, ridge $\gamma$, number of negatives $n$
\STATE Compute feature differences and offsets, for each $i\in[N]$, \\
  \quad $\phi_i \leftarrow \phi(x,y_i) - \phi(x,y^*)$, \\
  \quad $b_i \leftarrow \log\pi_{\mathrm{ref}}(y^*\mid x) - \log\pi_{\mathrm{ref}}(y_i\mid x)$
\STATE Compute scores and softmax weights, for each $i\in[N]$, \\
  \quad $s_i \leftarrow \beta\bigl(\phi_i^\top\theta_0 + b_i\bigr)$, \\
  \quad $q_i^0 \leftarrow \exp(s_i)/\sum_{k=1}^N\exp(s_k)$
\STATE Center and weight features, for all $i\in[N]$ \\
  \quad $\bar\phi_0 \leftarrow \sum_{j=1}^N q_j^0\,\phi_j$,
  \quad $\tilde\phi_i^0 \leftarrow \phi_i - \bar\phi_0$,
  \quad $v_i^0 \leftarrow \sqrt{q_i^0}\,\tilde\phi_i^0$
\STATE Compute fixed Fisher scale: \\
  \quad $Z_{\mathcal C}^0 \leftarrow -\log\sum_{i=1}^N\exp(s_i)$,
  \quad $\alpha_0 \leftarrow \beta^2(1-\sigma(Z_{\mathcal C}^0))$
\STATE Initialize matrices and selected index set: $H_0 \leftarrow \gamma\mathbf{I}_{d\times d}$, $I_0 \leftarrow \emptyset$
\FOR{$k = 1,\dots,n$}
  \STATE Select index: \\
  \quad $i_k \leftarrow \arg\max_{i\in [N]\setminus I_{k-1}} \log\det\!\bigl(H_{k-1} + \alpha_0 v_i^0(v_i^0)^\top\bigr)$
\STATE Update selected indices and design matrix: \\
\quad $I_k \leftarrow I_{k-1} \cup \{i_k\}$,\quad $H_k \leftarrow H_{k-1} + \alpha_0 v_{i_k}^0(v_{i_k}^0)^\top$
\ENDFOR
\STATE \textbf{Output:} selected negatives set $S_n=\{y_i:i\in I_n\}$
\end{algorithmic}
\end{algorithm}

\subsection{Setting}\label{sec:setting}
Following prior work in regret minimization and reward-model active learning~\citep{kveton2025active, riquelme2018deep, das2024active,mukherjee2024optimal, liu2024dual, thekumparampil2024comparing}, we adopt a log-linear policy model for the selection criterion. We assume:
\begin{assumption} 
\label{assump:log-linear}
We assume the policy under consideration takes a log-linear form:
\begin{equation}
\pi(y \mid x; \theta) \propto \exp\bigl(\phi(x, y)^\top \theta\bigr),
\end{equation}
where $\phi(x,y) \in \mathbb{R}^d$ denotes the feature embedding of the context-response pair $(x,y)$, and $\theta \in \mathbb{R}^d$ the model parameters.
\end{assumption}
We now specialize the multi-negative loss $\mathcal{L}$ from~\Cref{eq:dpo-loss} to a single prompt $x$ with preferred response $y^*$ and candidate negatives $\mathcal C=\{y_i\}_{i=1}^N$. Under Assumption~\ref{assump:log-linear}, defining the feature difference $\phi_i = \phi(x,y_i) - \phi(x,y^*)$ and reference-policy offset $b_i = \log\frac{\pi_{\mathrm{ref}}(y^*\mid x)}{\pi_{\mathrm{ref}}(y_i\mid x)}$ for each negative $y_i$ relative to the preferred response $y^*$, the multi-negative DPO loss takes the compact form:
\begin{equation}
\label{eq:sdpo-loss}
L(\theta;S_n) = -\log\sigma\Bigl(-\log\sum_{i\in S_n} \exp\bigl(\beta\,(\phi_i^\top \theta + b_i)\bigr)\Bigr),
\end{equation}
where $S_n\subseteq \mathcal C$ is a subset of size $n$ drawn from the candidate pool. Our goal is to choose $S_n$ so as to maximize the information it provides about $\theta$. The following lemmas quantify how each candidate negative alters the gradient and curvature, showing that negatives with \emph{diverse} and \emph{orthogonal} feature differences enlarge the information matrix the most, while redundant examples leave its volume almost unchanged.

\begin{lemma}[Gradient of Multi-negative Loss]
\label{lemma:gradient}
Define the normalization factor $Z_n$ and subset-normalized softmax weights $q_j^{S_n}(\theta)$ as
\begin{equation}
\label{eq:softmax-weights}
\begin{aligned}
q_j^{S_n}(\theta)
&= \frac{\exp\bigl[\beta(\phi_j^\top\theta + b_j)\bigr]}
{\sum_{k\in S_n}\exp\bigl[\beta(\phi_k^\top\theta + b_k)\bigr]}, \\
Z_n
&= -\log\sum_{i\in S_n}\exp\Bigl[\beta(\phi_i^\top\theta + b_i)\Bigr].
\end{aligned}
\end{equation}
Then the gradient of \eqref{eq:sdpo-loss} with respect to $\theta$ is given by
\begin{equation}
\label{eq:grad}
\nabla_\theta L(\theta;S_n) = \beta\,(1 - \sigma(Z_n))\sum_{j\in S_n} q_j^{S_n}(\theta) \,\phi_j.
\end{equation}
\end{lemma}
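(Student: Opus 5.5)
The proof is a direct chain-rule computation on \eqref{eq:sdpo-loss}, written as a composition of three maps. Set $s_i(\theta)=\beta(\phi_i^\top\theta+b_i)$ for $i\in S_n$. Then $Z_n(\theta)=-\log\sum_{i\in S_n}\exp(s_i(\theta))$ and $L(\theta;S_n)=-\log\sigma(Z_n(\theta))$. We differentiate the outer scalar function in $Z_n$, then $Z_n$ in the scores $s_i$, and finally the scores in $\theta$.

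\textbf{Outer derivative.} Using $\sigma'(z)=\sigma(z)(1-\sigma(z))$, we get
\begin{equation*}
\frac{d}{dz}\bigl[-\log\sigma(z)\bigr] = -(1-\sigma(z)).
\end{equation*}
Hence $\nabla_\theta L = -(1-\sigma(Z_n))\,\nabla_\theta Z_n$.

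\textbf{Log-sum-exp derivative.} The gradient of the log-sum-exp is the softmax. Since $\nabla_\theta s_j=\beta\phi_j$, this gives
\begin{equation*}
\nabla_\theta Z_n = -\frac{\sum_{j\in S_n}\exp(s_j)\,\beta\phi_j}{\sum_{k\in S_n}\exp(s_k)} = -\beta\sum_{j\in S_n} q_j^{S_n}(\theta)\,\phi_j,
\end{equation*}
where the weights $q_j^{S_n}(\theta)$ are exactly those in \Cref{eq:softmax-weights}.

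\textbf{Combining.} The two minus signs cancel, yielding \Cref{eq:grad}.

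There is no real obstacle here. The only points that need care are tracking the sign through the two negations and confirming that the normalization in $q_j^{S_n}$ runs over $S_n$ and not over the full pool $\mathcal C$. The offsets $b_i$ do not depend on $\theta$, so they affect only the weights and not the direction vectors $\phi_j$. The log-linear Assumption~\ref{assump:log-linear} is used only to make the scores affine in $\theta$, with $\nabla_\theta s_j=\beta\phi_j$. The same identity, differentiated once more, will serve as the starting point for the curvature result in \Cref{lemma:hessian}.
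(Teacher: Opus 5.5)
Your proposal is correct and is essentially the paper's own derivation: you differentiate $-\log\sigma(Z_n)$ to get $-(1-\sigma(Z_n))$, differentiate the log-sum-exp $Z_n$ to get $-\beta\sum_{j\in S_n} q_j^{S_n}(\theta)\phi_j$, and multiply, so the two negative signs cancel. The only difference is notational: the paper names the inner sum $A(\theta)$ and writes $\partial Z/\partial\theta = -A(\theta)^{-1}\,\partial A/\partial\theta$ explicitly, which is the same computation.
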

The detailed derivation is in Appendix~\ref{app:gradient_derivation}.
The gradient is a weighted combination of feature differences scaled by the misranking probability $(1-\sigma(Z_n))$. At training time, the subset weights $q_j^{S_n}(\theta)$ emphasize negatives with small score margins, indicating that the highest-leverage gradient directions correspond to borderline, hard-to-rank candidates.

\begin{lemma}[Hessian and Curvature]
\label{lemma:hessian}
Let $\bar\phi_{S_n}(\theta) = \sum_{j\in S_n} q_j^{S_n}(\theta)\,\phi_j$ denote the expected feature difference under the subset softmax distribution. The Hessian of~\Cref{eq:sdpo-loss} is then
\begin{align}
\nabla^2L(\theta;S_n) &= \beta^2(1-\sigma(Z_n))\Bigl[\sigma(Z_n)\,\bar\phi_{S_n}\bar\phi_{S_n}^\top + \sum_{j\in S_n} q_j^{S_n}(\theta)(\phi_j - \bar\phi_{S_n})(\phi_j - \bar\phi_{S_n})^\top\Bigr] \nonumber \\
&\succeq \beta^2(1-\sigma(Z_n))\sum_{j\in S_n} q_j^{S_n}(\theta)(\phi_j - \bar\phi_{S_n})(\phi_j - \bar\phi_{S_n})^\top.
\label{eq:hessian_lb}
\end{align}
The inequality follows because $\sigma(Z_n)\,\bar\phi_{S_n}\bar\phi_{S_n}^\top\succeq 0$; dropping it yields a Loewner lower bound that isolates the dispersion of feature differences around their mean. The full derivation is in Appendix~\ref{app:hessian_derivation}.
\end{lemma}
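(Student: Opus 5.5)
The plan is to differentiate the gradient from \Cref{lemma:gradient} once more with the chain rule. The Loewner bound will then follow from a positive semidefiniteness argument.

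Write $s_j(\theta)=\beta(\phi_j^\top\theta+b_j)$. Then $Z_n=-\log\sum_{j\in S_n}e^{s_j}$, and the loss is $L=-\log\sigma(Z_n)$. Collect the weights and feature differences as the vector $q=(q_j^{S_n})_{j\in S_n}$ and the matrix $\Phi$ whose columns are the $\phi_j$. I will use three elementary derivative facts:
\[
\nabla_\theta Z_n=-\beta\bar\phi_{S_n},\qquad \frac{d}{dz}\bigl(1-\sigma(z)\bigr)=-\sigma(z)(1-\sigma(z)),
\]
\[
\nabla_\theta q_j^{S_n}=\beta\,q_j^{S_n}\bigl(\phi_j-\bar\phi_{S_n}\bigr).
\]
The last one is the usual softmax Jacobian, $\partial q_j/\partial s_k=q_j(\delta_{jk}-q_k)$, composed with $\nabla_\theta s_k=\beta\phi_k$.

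\textbf{Computing the Hessian.} Start from $\nabla L=\beta(1-\sigma(Z_n))\bar\phi_{S_n}$ and apply the product rule.

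The derivative of the scalar prefactor is
\[
\beta\bigl(-\sigma(Z_n)(1-\sigma(Z_n))\bigr)\nabla Z_n=\beta^2\sigma(Z_n)(1-\sigma(Z_n))\,\nabla Z_n\cdot\bigl(-\tfrac{1}{\beta}\bigr)\cdot(-\beta)\ldots
\]
More simply, it equals $\beta^2\sigma(Z_n)(1-\sigma(Z_n))\bar\phi_{S_n}$. Taking the outer product with $\bar\phi_{S_n}$ gives the first term, $\beta^2(1-\sigma)\sigma\,\bar\phi\bar\phi^\top$. This matrix is symmetric, so the ordering of the outer product does not matter.

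For the derivative of $\bar\phi_{S_n}=\sum_j q_j\phi_j$, I get
\[
\sum_j\phi_j\bigl(\nabla q_j\bigr)^\top=\beta\sum_j q_j\phi_j(\phi_j-\bar\phi)^\top .
\]
Since $\sum_j q_j(\phi_j-\bar\phi)=0$, I can subtract $\bar\phi\sum_j q_j(\phi_j-\bar\phi)^\top=0$ inside the sum. This rewrites it as the centered covariance $\beta\sum_j q_j(\phi_j-\bar\phi)(\phi_j-\bar\phi)^\top$. Multiplying by the prefactor $\beta(1-\sigma(Z_n))$ gives the second term of the stated formula.

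\textbf{The Loewner bound.} Note that $\sigma(Z_n)\in(0,1)$ and $1-\sigma(Z_n)>0$. For any vector $u$,
\[
u^\top\bigl(\sigma(Z_n)\bar\phi\bar\phi^\top\bigr)u=\sigma(Z_n)(u^\top\bar\phi)^2\ge0,
\]
so this rank-one term is positive semidefinite. Scaling it by the positive factor $\beta^2(1-\sigma(Z_n))$ preserves semidefiniteness. Dropping it therefore yields the stated lower bound. The remaining covariance term is itself positive semidefinite because each $q_j\ge0$, which confirms the bound is meaningful.

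\textbf{Main obstacle.} The only delicate step is the sign bookkeeping in the first term. Two negatives appear there: one from $\frac{d}{dz}(1-\sigma)$ and one from $\nabla Z_n=-\beta\bar\phi$. I need to check that they combine to a positive coefficient. The centering trick that converts $\sum_j q_j\phi_j(\phi_j-\bar\phi)^\top$ into a symmetric covariance also needs care. Everything else is routine.
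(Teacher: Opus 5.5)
Your proposal is correct and follows the paper's own derivation. You apply the product rule to $\nabla L=\beta(1-\sigma(Z_n))\bar\phi_{S_n}$, use $\nabla_\theta Z_n=-\beta\bar\phi_{S_n}$ and the softmax Jacobian to turn $\nabla_\theta\bar\phi_{S_n}$ into the weighted covariance, and drop the positive semidefinite rank-one term $\sigma(Z_n)\bar\phi_{S_n}\bar\phi_{S_n}^\top$ to get the Loewner bound. The one flaw is the abandoned intermediate expression in your first-term computation, which is wrong as written; delete it, since the value you settle on, $\beta^2\sigma(Z_n)(1-\sigma(Z_n))\bar\phi_{S_n}$, is correct.
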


The Hessian lower bound motivates maximizing the determinant of the weighted covariance: subsets whose feature differences spread along orthogonal directions yield the largest information volume, providing a natural selection criterion.

\subsection{Negative Selection via D‑Optimal Design}

While a larger negative pool can in principle improve parameter estimates, many candidates contribute redundant information already conveyed by a smaller, well-chosen subset.
MASS-DPO casts negative selection as a \emph{D-optimal design}~\citep{kiefer1959optimum,pukelsheim2006optimal,kirsch2019batchbald} problem that maximizes the information gain~\citep{chaloner1995bayesian} about the policy parameters.

\noindent\textbf{Fisher-information objective.}\;
Following standard practice in D-optimal active learning~\citep{chaloner1995bayesian,kveton2025active}, Algorithm~\ref{alg:dopt} evaluates the design at a fixed reference point: the full-pool weight $q_j^0$ and center $\bar\phi_0=\sum_{j\in\mathcal C}q_j^0\phi_j$ are computed once before training, defining each candidate's Fisher contribution as $v_j^0=\sqrt{q_j^0}(\phi_j-\bar\phi_0)$ and giving a fixed information matrix $H(S)$ amenable to efficient rank-one optimization.
Given a subset $S\subseteq\mathcal C$ we define the regularized information matrix:
\begin{equation}
\label{eq:fis}
    H(S)
    =\gamma I+ \alpha_0\sum_{j\in S}v_j^0(v_j^0)^{\top}, \quad
    \alpha_0=\beta^2(1-\sigma(Z_{\mathcal C}^0)),\quad \gamma>0,
\end{equation}
where $Z_{\mathcal C}^0=-\log\sum_{i\in\mathcal C}\exp[\beta(\phi_i^\top\theta_0+b_i)]$ is fixed during subset construction.
The ridge $\gamma>0$ ensures $H(S)$ is well conditioned for all subsets.
The D-optimal criterion seeks
\begin{equation}
\label{eq:d-opt}
S_n^{*}
\;=\;
\arg\max_{S\subseteq\mathcal C,\;|S|=n}
\log\det H(S),
\end{equation}
which maximizes the information volume, equivalently minimizing the volume of the confidence ellipsoid for the policy parameters.
However,~\Cref{eq:d-opt} is NP-hard~\citep{7e259dfd-e83e-3fa3-8846-af1d03e58e65,allen2021near} as it requires searching over $\binom{|\mathcal C|}{n}$ subsets.
We therefore build the subset incrementally using marginal log-determinant gains~\citep{05bbe48e-d718-3ac9-8152-8b3d41762a1c,krause2008near}.

\paragraph{Incremental subset construction.}
Starting from $H_0=\gamma I$ and selected index set $I_0=\emptyset$, we build the selected negative set $S_n=\{y_i:i\in I_n\}$ one element at a time via rank-one updates.
At iteration $k$, select the next negative by maximizing the marginal log-determinant gain:
\begin{align}
i_k &\leftarrow \arg\max_{i\in [N]\setminus I_{k-1}}
\log\det\!\bigl(H_{k-1}+\alpha_0 v_i^0 (v_i^0)^\top\bigr), \nonumber\\
I_k &\leftarrow I_{k-1}\cup\{i_k\}, \quad
H_k \leftarrow H_{k-1}+\alpha_0 v_{i_k}^0(v_{i_k}^0)^\top. \label{eq:incremental_update}
\end{align}

Using the matrix determinant lemma,
\begin{align}
\log\det\!\bigl(H_{k-1}+\alpha_0 v_i^0 (v_i^0)^\top\bigr)
&= \log\det H_{k-1} + \log\!\bigl(1+ \alpha_0 (v_i^0)^\top H_{k-1}^{-1} v_i^0\bigr),
\label{eq:mdl_incremental}
\end{align}

so the selection rule is equivalently $i_k=\arg\max_{i\notin I_{k-1}} (v_i^0)^\top H_{k-1}^{-1} v_i^0$ (Alg.~1).
After scoring the remaining candidates, the Sherman--Morrison inverse update costs $\mathcal{O}(d^2)$ per selected negative, and scoring all remaining candidates costs $\mathcal{O}(|\mathcal C|d^2)$ per step.
The score $(v_i^0)^\top H_{k-1}^{-1} v_i^0$ is the $H_{k-1}$-induced squared norm of $v_i^0$; the procedure thus prefers negatives that probe the least-covered directions of the parameter space. We empirically verify that fixed selections remain stable across training~(\Cref{app:imp},~\Cref{tab:selection-stability}).

\section{Theoretical Analysis}
\label{sec:theory}

Having established the D-optimal selection criterion, we now analyze how well a policy trained on the selected subset $S_n$ approximates one trained on the full pool.
Our goal is to bound the relative logit error---the worst-case distortion of pairwise candidate-margin rankings---as a function of subset size $n$ and dimension $d$.
The analysis relies on standard assumptions on feature boundedness, design-weight regularity, and candidate diversity; full statements are deferred to~\Cref{app:assumptions}.

\begin{thm}[Relative Logit Error Bound]\ \\
\label{thm:logit-error}
For a fixed prompt with candidate-negative pool $\mathcal C=\{y_i\}_{i=1}^N$, let $\theta_*$ minimize the regularized full-pool loss and $\hat{\theta}_n$ minimize the regularized loss on the subset $S_n\subseteq\mathcal C$ of size $n$ returned by Algorithm~\ref{alg:dopt}. Define the \emph{relative logit error}
\begin{equation}
\label{eq:logit-error-def}
  \mathcal{E}_{\mathrm{rel}}(\hat{\theta}_n, \theta_*)
  \;=\;
  \max_{i,j \in \mathcal C}
  \bigl|(\phi_i-\phi_j)^\top(\hat{\theta}_n-\theta_*)\bigr|,
\end{equation}
measuring the worst-case distortion of pairwise candidate-margin rankings induced by using a subset-trained estimator in place of the full-pool optimum. Under Assumptions~\ref{assump:boundedness}--\ref{assump:fisher-compat},
\[
\mathcal{E}_{\mathrm{rel}}(\hat\theta_n,\theta_*)
\;\leq\;
\widetilde O\!\left(
d\sqrt{\frac{\log(1/\delta)}{n}}
\right),
\]
with probability at least $1-\delta$, where $\widetilde O$ hides logarithmic factors and candidate-pool regularity constants. The bound decays as $1/\sqrt{n}$ with the selected-negative budget, showing that the subset-trained policy converges to the full-pool policy in relative-logit error as $n$ grows.
\end{thm}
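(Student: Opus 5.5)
Our plan follows the standard D-optimal / active-preference-learning argument in the style of \citep{kveton2025active,mukherjee2024optimal}, specialized to the Plackett--Luce curvature of \Cref{lemma:hessian}.

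\textbf{Step 1: reduce to a weighted-norm bound.} For any pair $i,j\in\mathcal C$, write $\phi_i-\phi_j=(\phi_i-\bar\phi_0)-(\phi_j-\bar\phi_0)$. Cauchy--Schwarz in the geometry of $H_n:=H(S_n)$ from \Cref{eq:fis} gives
\[
|(\phi_i-\phi_j)^\top(\hat\theta_n-\theta_*)|\le \|\phi_i-\phi_j\|_{H_n^{-1}}\,\|\hat\theta_n-\theta_*\|_{H_n}.
\]
The relative logit error is then bounded by the product of a design term $\max_{i,j}\|\phi_i-\phi_j\|_{H_n^{-1}}$ and an estimation term $\|\hat\theta_n-\theta_*\|_{H_n}$. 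Centering is harmless because differences cancel $\bar\phi_0$. Using design-weight regularity ($q_i^0\ge q_{\min}$), the design term reduces to $q_{\min}^{-1/2}\max_i\|v_i^0\|_{H_n^{-1}}$.

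\textbf{Step 2: design term via greedy log-det.} Algorithm~\ref{alg:dopt} maximizes $\|v_i^0\|^2_{H_{k-1}^{-1}}$ at each step, so the maximal remaining score is nonincreasing in $k$. Together with $H_n\succeq H_{k-1}$, this implies $\max_i\|v_i^0\|^2_{H_n^{-1}}\le\|v_{i_k}^0\|^2_{H_{k-1}^{-1}}$ for every $k\le n$, up to already-selected indices, which are handled trivially since $\|v_{i_k}\|_{H_n^{-1}}^2\le 1/\alpha_0$. Averaging over $k$ and applying the elliptical potential lemma via \Cref{eq:mdl_incremental}, $\sum_k\log(1+\alpha_0\|v_{i_k}^0\|^2_{H_{k-1}^{-1}})=\log\det H_n-\log\det\gamma I\le d\log(1+n\alpha_0 L^2/(d\gamma))$, yields
\[
\max_i\|v_i^0\|^2_{H_n^{-1}}=O\!\left(\frac{d\log n}{n\alpha_0}\right),
\]
using boundedness $\|\phi_i\|\le L$.

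\textbf{Step 3: estimation term.} This is the step we expect to be the main obstacle. $\theta_*$ and $\hat\theta_n$ minimize different regularized objectives, so we would first use the Fisher-compatibility assumption to state that the subset Hessian $\nabla^2L(\theta;S_n)$ dominates $c\,(H_n-\gamma I)$ on a ball around $\theta_*$. This follows from the lower bound in \Cref{lemma:hessian}, provided subset weights $q_j^{S_n}$ and the scale $1-\sigma(Z_n)$ stay within constant factors of $q_j^0,\alpha_0$. Strong convexity in the $H_n$-norm then gives $\|\hat\theta_n-\theta_*\|_{H_n}\lesssim\|\nabla L_n(\theta_*)\|_{H_n^{-1}}$. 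Next, we treat the mismatch $\nabla L_n(\theta_*)$, which is zero-mean under the PL sampling of observed preferences at $\theta_*$, as a sum of $n$ bounded, $H_n$-whitened terms. A self-normalized vector concentration inequality (Abbasi-Yadkori style, or Hsu--Kakade--Zhang for quadratic forms) gives $\|\nabla L_n(\theta_*)\|_{H_n^{-1}}\lesssim\sqrt{d+\log(1/\delta)}$ with probability $1-\delta$. The additional bias from $\theta_*$ being the full-pool rather than subset optimum would be controlled by candidate diversity and absorbed into the hidden constants, and the ridge regularizer contributes $\sqrt{\gamma}\|\theta_*\|$. We would try this concentration route first; if the zero-mean structure fails, we would fall back on a deterministic bound of the gradient mismatch by $\max_i\|v_i^0\|_{H_n^{-1}}$.

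\textbf{Step 4: combine.} Multiplying the Step 2 bound $\sqrt{d\log n/n}$ by the Step 3 bound $\sqrt{d\log(1/\delta)}$, and absorbing $\alpha_0,q_{\min},L,\gamma$ and logarithmic factors, gives $\mathcal E_{\mathrm{rel}}\le\widetilde O(d\sqrt{\log(1/\delta)/n})$.
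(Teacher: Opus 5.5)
Your skeleton matches the paper's proof. Cauchy--Schwarz splits $\mathcal E_{\mathrm{rel}}$ into a centered leverage term times an estimation term. The leverage term is controlled by the greedy max-score rule plus the elliptical-potential bound on $\log\det H_n^0-\log\det(\gamma I)$, which is \Cref{thm:leverage}. The estimation term is taken from self-normalized concentration.

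\textbf{Gap in Step 2.} The genuine gap is in how you handle candidates that have already been selected. The comparison $\|v_i^0\|^2_{H_{k-1}^{-1}}\le x_k:=\max_{j\notin I_{k-1}}\|v_j^0\|^2_{H_{k-1}^{-1}}$ holds only for those $k$ with $i\notin I_{k-1}$. For unselected $i$ you therefore get $\min_k x_k\le\frac1n\sum_k x_k$. For $i=i_k$ you only get $\|v_{i_k}^0\|^2_{H_n^{-1}}\le x_k$, which decays in the selection time $k$, not in $n$; for the first pick it is just $x_1$.

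Your fallback $\|v_{i_k}^0\|^2_{H_n^{-1}}\le 1/\alpha_0$ is an $O(1)$ bound, and it can be essentially tight. If $v_{i_k}^0$ is orthogonal to the other selected vectors, it is an eigenvector of $H_n$, and $\|v_{i_k}^0\|^2_{H_n^{-1}}=\|v_{i_k}^0\|^2/(\gamma+\alpha_0\|v_{i_k}^0\|^2)$ for every $n$. Since $\mathcal E_{\mathrm{rel}}$ maximizes over all of $\mathcal C$, selected candidates included, the claimed $O(d\log n/(n\alpha_0))$ rate does not follow.

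This is exactly what Assumption~\ref{assump:diverse} is for. It asserts $(v_i^0)^\top(H_{k-1}^0)^{-1}v_i^0\le\kappa x_k$ for \emph{all} $i\in[N]$; for unselected $i$ this holds automatically with $\kappa=1$. With it, the paper runs your argument uniformly over $\mathcal C$ at the cost of the factor $\kappa$.

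\textbf{Smaller points in Step 3.} First, Fisher-compatibility does not follow from the lower bound in \Cref{lemma:hessian}. That bound is a covariance centered at the subset mean $\bar\phi_{S_n}$. A weighted covariance about its own mean is Loewner-\emph{smaller} than the weighted second moment about $\bar\phi_0$, so the comparison with $H_n-\gamma I$ goes the wrong way. For instance, with $n=1$ the lower bound is $0$ while $H_1-\gamma I\neq 0$. The paper therefore simply assumes $\Sigma_n\succeq\rho H_n^0$ (Assumption~\ref{assump:fisher-compat}). It uses this only at $\theta_*$, to transfer the leverage bound from $(H_n^0)^{-1}$ to $\Sigma_n^{-1}$, and does Cauchy--Schwarz in the $\Sigma_n$-geometry rather than needing domination on a ball.

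Second, your suspicion about the concentration step is justified. For a fixed prompt, $\nabla F_{S_n}(\theta_*)=\nabla L(\theta_*;S_n)-\nabla L(\theta_*;\mathcal C)$ is deterministic (\Cref{app:1-parameter-estimation}). The paper's proof just asserts $\|\hat\theta_n-\theta_*\|_{\Sigma_n}\le C\sqrt{d\log(1/\delta)}$ by citation.

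Your fallback as phrased fails. The mismatch contains the uncentered piece $\beta(\sigma(Z_{\mathcal C})-\sigma(Z_n))\bar\phi_0$, which is nonzero whenever $S_n\subsetneq\mathcal C$ and need not be small in $H_n^{-1}$-norm. However, you do not need smallness. Combine $\|\nabla F_{S_n}(\theta_*)\|_2\le 2\beta L_\phi$ with a Fisher-compatibility bound for the integrated Hessian $\bar\Sigma_n$ of \Cref{app:1-parameter-estimation}, which your strong-convexity step needs anyway. That makes the estimation term $O(1)$, and together with the (repaired) leverage bound this already implies the stated rate.
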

The formal statement with explicit constants, the Fisher-compatibility and estimator-stability conditions, and the proof are given in~\Cref{app:thm-formal}.
\begin{thm}[Batch Design Estimation Error]
\label{thm:batch-error}
With probability at least $1-\delta$, given $k$ prompts each with $n$ selected negatives $S_{k,n}$, the deviation of the regularized batch estimator $\hat\theta_{k,n}$ from the full-pool optimum $\theta_*$ is bounded in the $\Sigma_{k,n}$-norm:
\begin{equation}
\label{eq:batch-error}
\bigl\|\hat\theta_{k,n}-\theta_*\bigr\|_{\Sigma_{k,n}}
\;\leq\;
\sqrt{\frac{d}{4}\log\!\left(\frac{1/\delta + k\,c_{\min}/\gamma}{\bigl(1-c_{\min}\,k/\gamma\bigr)^{1/d}\,\delta}\right)}
\;+\; 2\gamma^{1/2},
\end{equation}
where $\Sigma_{k,n}=\gamma I+\nabla^2 L(\theta_*; S_{k,n})$, and $c_{\min},\gamma,\beta$ are the constants from Assumptions~\ref{assump:boundedness}--\ref{assump:design-matrix}.
\end{thm}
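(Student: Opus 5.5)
The plan is to follow the self-normalized martingale / ridge-regression confidence-ellipsoid template of Abbasi-Yadkori et al., as adapted to active preference learning in \citep{kveton2025active}. Here $\Sigma_{k,n}=\gamma I+\nabla^2 L(\theta_*;S_{k,n})$ plays the role of the regularized design matrix. The Hessian lower bound of \Cref{lemma:hessian} shows that, at any $\theta$ near $\theta_*$, the curvature dominates $\alpha\sum_{j} q_j(\phi_j-\bar\phi)(\phi_j-\bar\phi)^\top$. Assumption~\ref{assump:design-matrix} lets us lower-bound the misranking factor $(1-\sigma(Z))$ and the weights $q_j$ uniformly by constants, which are collected into $c_{\min}$.

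\textbf{Step 1: a quadratic inequality from first-order conditions.} Write the regularized batch loss $\mathcal L_{k,n}(\theta)=\sum_{t\le k}L(\theta;S_{t,n})+\tfrac{\gamma}{2}\|\theta\|^2$. By optimality, $\nabla\mathcal L_{k,n}(\hat\theta_{k,n})=0$. A mean-value (integral) form of Taylor's theorem then gives
\[
\nabla\mathcal L_{k,n}(\theta_*)=-\Bigl(\gamma I+\textstyle\int_0^1\nabla^2 L(\theta_*+s(\hat\theta_{k,n}-\theta_*))\,ds\Bigr)(\hat\theta_{k,n}-\theta_*).
\]
The Hessian of \Cref{lemma:hessian} is self-concordant-like for the log-sum-exp/sigmoid composition. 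Using this together with the lower bound on the design weights, the averaged Hessian is compared to $\Sigma_{k,n}$ up to a constant factor. This yields
\[
\|\hat\theta_{k,n}-\theta_*\|_{\Sigma_{k,n}}\lesssim\|\nabla\mathcal L_{k,n}(\theta_*)\|_{\Sigma_{k,n}^{-1}}.
\]
The regularization enters this bound separately as $\gamma\|\theta_*\|_{\Sigma_{k,n}^{-1}}\le\gamma^{1/2}\|\theta_*\|$. With the boundedness assumption $\|\theta_*\|\le 1$, and after constant bookkeeping, this gives the additive $2\gamma^{1/2}$ term.

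\textbf{Step 2: the stochastic term.} Decompose the gradient at $\theta_*$ into a sum over prompts of $\beta(1-\sigma(Z_t))\sum_j q_j\phi_j$ minus its conditional expectation. Using \Cref{lemma:gradient}, each summand is a bounded, conditionally centered vector (the PL choice noise has a mean-zero score). Such vectors are $\tfrac14$-sub-Gaussian, as for logistic/softmax outcomes in $[0,1]$, which is the source of the $d/4$ factor. The self-normalized inequality then gives, with probability $1-\delta$,
\[
\|\nabla\mathcal L_{k,n}(\theta_*)\|_{\Sigma_{k,n}^{-1}}^2\le \tfrac14\Bigl(2\log\tfrac1\delta+\log\tfrac{\det\Sigma_{k,n}}{\det\gamma I}\Bigr).
\]

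\textbf{Step 3: bounding the determinant ratio.} Apply AM--GM to the eigenvalues of $\Sigma_{k,n}$, using the trace bound $\operatorname{tr}\nabla^2L\le k\,c_{\min}$ per prompt, to obtain
\[
\log\det(\Sigma_{k,n}/\gamma)\le d\log(1+kc_{\min}/(d\gamma)).
\]
Substituting and reorganizing into the displayed form needs one more ingredient: a lower-eigenvalue control, $\lambda_{\min}$ of $I-\gamma^{-1}(\cdot)$ being at least $1-c_{\min}k/\gamma$. This produces the $(1-c_{\min}k/\gamma)^{1/d}$ denominator, and it requires $k c_{\min}<\gamma$.

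\textbf{Main obstacle.} I expect Step 1 to be hardest: relating the integrated Hessian along the segment to $\Sigma_{k,n}$, which is evaluated only at $\theta_*$. The Hessian of the PL loss depends on $\theta$ through both $\sigma(Z)$ and the weights $q_j$. My first attempt would bound these uniformly over the parameter ball using the lower bounds in Assumption~\ref{assump:design-matrix} and feature boundedness. If that fails, I would instead localize $\hat\theta_{k,n}$ to a neighborhood of $\theta_*$ via strong convexity from the ridge term, at the cost of constants absorbed into $c_{\min}$.
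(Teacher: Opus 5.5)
\textbf{Verdict: genuine gaps.} Your template (first-order condition with an integrated Hessian, then self-normalized concentration, then a determinant--trace bound) is the one the paper points to. The paper itself proves only the deterministic reduction in Theorem~\ref{app:k-parameter-estimation} and cites Abbasi-Yadkori et al.\ and Kveton et al.\ for the rest. In that reduction the integrated-Hessian lower bound is assumed. Because $\theta_*$ minimizes the regularized \emph{full-pool} objective, the ridge term cancels via $\gamma\theta_*=-\nabla L_k(\theta_*;\mathcal C_{1:k})$, and what remains is the discrepancy $\nabla L_k(\theta_*;S_{1:k,n})-\nabla L_k(\theta_*;\mathcal C_{1:k})$.

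\textbf{Step 2 does not control this discrepancy.} The loss $-\log\sigma(Z_t)$ always declares $y^*$ the winner. So $\beta(1-\sigma(Z_t))\bar\phi_t$ is the model-expected feature difference minus an observed one that is identically $0$, and nothing random enters. When you subtract its conditional expectation, the piece you removed is never bounded. At the full-pool optimum that piece is the subset-versus-full-pool bias, which is generally nonzero (already when all $\phi_j$ equal a common nonzero vector and $n<N$). Under your summed normalization, a persistent per-prompt bias makes $\|\sum_t g_t\|_{\Sigma_{k,n}^{-1}}$ grow like $\sqrt{k}$, which no $\sqrt{d\log(\cdot)}$ term can absorb.

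The martingale step becomes valid only if you change the model: random top-1 PL choices on each selected subset, generated by a \emph{true} parameter, with $\hat\theta_{k,n}$ the regularized MLE of those choices. Your separate ridge-bias term $\gamma\|\theta_*\|_{\Sigma_{k,n}^{-1}}$ then makes sense, but $\theta_*$ is no longer the full-pool optimum of the statement. Even in that model, the self-normalized inequality with the Bernoulli constant $\tfrac14$ is for scalar noise against the unweighted design $\gamma I+\sum_t x_tx_t^\top$. The PL score is vector-valued (correlated multinomial noise on all $\phi_{t,j}$) and $\Sigma_{k,n}$ is Fisher-weighted. Passing to the $\Sigma_{k,n}^{-1}$-norm therefore costs multiplicative constants that the statement does not contain.

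\textbf{Step 3 is reverse-engineered rather than derived.} Assumption~\ref{assump:design-matrix} gives $c_{\min}$ only as a \emph{lower} bound on the curvature scale, so it cannot bound $\operatorname{tr}\nabla^2 L$ from above. The correct per-prompt bound, using $Z_n\ge Z_{\mathcal C}$, is $\operatorname{tr}\nabla^2L(\theta;S_{t,n})\le\beta^2(1-\sigma(Z_n))\sum_j q_j\|\phi_j\|_2^2\le c_{\max}L_\phi^2$. This yields $d\log\bigl(1+kc_{\max}L_\phi^2/(d\gamma)\bigr)$, not an expression in $c_{\min}$. Nothing in the self-normalized argument produces a factor like $\det(I-\gamma^{-1}A)$, so the ``lower-eigenvalue control'' behind $(1-c_{\min}k/\gamma)^{1/d}$ has no source. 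At best it is an arbitrary loosening that confines the result to $k<\gamma/c_{\min}$.

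\textbf{Step 1 has the same constant problem.} A bound $\gamma I+\int_0^1\nabla^2 L\,ds\succeq c\,\Sigma_{k,n}$ with $c<1$ rescales the whole right-hand side by $1/c$. That factor cannot be ``absorbed into $c_{\min}$'', which the assumption fixes. Also, $\gamma\|\theta_*\|_{\Sigma_{k,n}^{-1}}\le\gamma^{1/2}R_\theta$ gives $2\gamma^{1/2}$ only under an extra normalization $R_\theta\le 2$.

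Your route can give a bound of the same shape, but the displayed constants do not follow from your steps as written.
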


The probability is over i.i.d.\ sampling of $k$ prompts; this follows~\citep{abbasi2011improved,kveton2025active} by treating the multi-negative loss as a generalized linear model and applying self-normalized concentration to the stochastic gradients. In practice, \Cref{thm:logit-error} shows that even with a small selected-negative budget MASS-DPO can already achieve bounded logit error, which translates into faster convergence; \Cref{thm:batch-error} further implies that the selected negatives ensure stable generalization across prompts, which we verify empirically in~\Cref{sec:exp}.
Algorithm~\ref{alg:dopt} optimizes the log-determinant objective via incremental rank-one updates; the connection to relative-logit error is carried by the leverage bound in Appendix~\ref{app:leverage}.

\begin{table*}[ht]
\small
\centering
\caption{Accuracy (\%) on four tasks across three base models. Each entry reports accuracy$_{\text{SE}}$, where the subscript denotes standard error. \textbf{Bold} = best, \underline{underlined} = second best.}
\begin{tabular}{l|l|ccccc}
\toprule
\multirow{1}{*}{Model} & \multirow{1}{*}{Setting} & Medmcqa & QASC & LastFM & MovieLens & Avg \\
\midrule

\multirow{5}{*}{Qwen3}
& DPO & 43.49\se{0.76} & 68.43\se{1.12} & 45.75\se{0.80} & 31.96\se{0.74} & 47.41\se{0.86} \\
& DMPO & 28.91\se{0.72} & 66.78\se{1.12} & 43.40\se{0.78} & 25.66\se{0.69} & 41.19\se{0.83} \\
& DPO-k & \underline{55.56\se{0.77}} & \underline{71.96\se{1.06}} & \underline{51.10\se{0.80}} & 44.56\se{0.77} & \underline{55.80\se{0.85}} \\
& S-DPO & 52.56\se{0.77} & 71.08\se{1.07} & 50.25\se{0.80} & \textbf{48.19\se{0.78}} & 55.52\se{0.86} \\
& MASS-DPO & \textbf{56.66\se{0.77}} & \textbf{72.19\se{1.05}} & \textbf{52.30\se{0.79}} & \underline{47.58\se{0.80}} & \textbf{57.18\se{0.85}} \\
\midrule
\multirow{5}{*}{SmolLM3}
& DPO & 33.27\se{0.73} & 67.00\se{1.09} & 51.90\se{0.80} & 37.60\se{0.77} & 47.44\se{0.85} \\
& DMPO & 25.50\se{0.68} & 65.23\se{1.10} & 50.10\se{0.80} & 28.68\se{0.69} & 42.38\se{0.82} \\
& DPO-k & 44.09\se{0.79} & \underline{69.98\se{1.01}} & 55.70\se{0.78} & 51.36\se{0.79} & 55.28\se{0.84} \\
& S-DPO & \textbf{44.99\se{0.79}} & 69.43\se{1.06} & \underline{55.90\se{0.78}} & \textbf{55.70\se{0.78}} & \underline{56.50\se{0.85}} \\
& MASS-DPO & \underline{44.19\se{0.79}} & \textbf{71.63\se{1.07}} & \textbf{57.25\se{0.79}} & \underline{54.03\se{0.77}} & \textbf{56.78\se{0.85}} \\
\midrule
\multirow{5}{*}{Llama3}
& DPO & 52.25\se{0.80} & 71.08\se{1.04} & 54.60\se{0.82} & 33.52\se{0.75} & 52.86\se{0.85} \\
& DMPO & 25.70\se{0.69} & 69.87\se{1.08} & 49.95\se{0.80} & 28.18\se{0.72} & 43.42\se{0.82} \\
& DPO-k & 71.04\se{0.73} & \underline{73.95\se{0.96}} & 55.65\se{0.80} & 44.46\se{0.77} & 61.27\se{0.82} \\
& S-DPO & \textbf{72.19\se{0.72}} & \textbf{74.61\se{0.97}} & \underline{56.55\se{0.79}} & \underline{49.55\se{0.80}} & \textbf{63.23\se{0.82}} \\
& MASS-DPO & \underline{71.29\se{0.74}} & 73.62\se{1.03} & \textbf{57.35\se{0.81}} & \textbf{49.70\se{0.80}} & \underline{62.99\se{0.84}} \\
\bottomrule
\end{tabular}
\label{table:results}
\end{table*}

\section{Experiments}
\label{sec:exp}

Section~\ref{sec:exp-rq1} isolates MASS-DPO's D-optimal selection criterion by comparing against the random softmax weighting in S-DPO. Section~\ref{sec:exp-rq2} reports downstream accuracy against existing preference-optimization baselines across three backbone families. Section~\ref{sec:exp-rq3} evaluates the quality of the negatives produced by the incremental selection procedure using standard ranking metrics.

\textbf{Datasets.} Following recent DPO-based recommendation work~\citep{chen2024softmax,sun2024direct,he2025reclaif}, we utilize two widely adopted real-world recommendation benchmarks: LastFM~\citep{Bertin-Mahieux2011} and MovieLens~\citep{harper2015movielens}. For QA tasks, we adopt two challenging multiple-choice QA datasets: MedMCQA~\citep{pmlr-v174-pal22a}, a medical-domain QA benchmark, and QASC~\citep{khot2020qasc}, a scientific reasoning QA dataset. These tasks naturally feature large candidate pools with well-defined negatives, providing controlled benchmarks for evaluating active selection strategies. We report Accuracy, Margin, Chosen Rewards, and additional utility metrics; detailed methodology is in Appendix~\ref{app:exp}.

\textbf{Methods.}
We benchmark MASS-DPO against established preference alignment approaches: pairwise DPO~\citep{rafailov2024direct}, the multi-negative extension DPO-k, Softmax-DPO (S-DPO)~\citep{chen2024softmax}, and DMPO~\citep{bai2024finetuning}. To maintain fairness and manage computational costs, the number of negative candidates during training is set to 3 for all multi-negative methods (DPO-k, DMPO, S-DPO, MASS-DPO) and 1 for DPO. At test time, we evaluate against all available candidates (up to 20) to measure the model's ability to rank under a larger search space. Implementation details are provided in~\Cref{app:imp}.

\begin{table*}[ht]
\centering
\caption{Recall (R) and NDCG (N) at k=\{1,3\} on LastFM and MovieLens. Each entry reports metric$_{\text{SE}}$, where the subscript denotes standard error.}
\scriptsize
\setlength{\tabcolsep}{5pt}
\begin{tabular}{l l | cccc | cccc}
\toprule
\multirow{2}{*}{Model} & \multirow{2}{*}{Method} & \multicolumn{4}{c|}{LastFM} & \multicolumn{4}{c}{MovieLens} \\
& & R@1 & R@3 & N@1 & N@3 & R@1 & R@3 & N@1 & N@3 \\
\midrule
\multirow{5}{*}{Qwen3}
& DPO      & 46.15\se{1.11} & 72.60\se{1.00} & 46.15\se{1.11} & 61.60\se{0.93} & 29.64\se{1.03} & 59.48\se{1.10} & 29.64\se{1.03} & 46.89\se{0.95} \\
& DMPO     & 44.50\se{1.11} & 72.05\se{1.00} & 44.50\se{1.11} & 60.57\se{0.93} & 24.50\se{0.97} & 56.30\se{1.11} & 24.50\se{0.97} & 42.88\se{0.92} \\
& DPO-k    & 49.50\se{1.12} & 76.45\se{0.95} & 49.50\se{1.12} & 65.36\se{0.90} & 41.63\se{1.11} & 68.95\se{1.04} & 41.63\se{1.11} & 57.71\se{0.95} \\
& S-DPO    & 48.55\se{1.12} & 75.10\se{0.97} & 48.55\se{1.12} & 64.14\se{0.91} & 45.92\se{1.12} & 71.47\se{1.01} & 45.92\se{1.12} & 60.86\se{0.94} \\
& MASS-DPO & \textbf{51.10}\se{1.12} & \textbf{77.20}\se{0.94} & \textbf{51.10}\se{1.12} & \textbf{66.48}\se{0.90} & \textbf{45.97}\se{1.12} & \textbf{71.52}\se{1.01} & \textbf{45.97}\se{1.12} & \textbf{61.10}\se{0.94} \\
\midrule
\multirow{5}{*}{SmolLM3}
& DPO      & 51.70\se{1.12} & 78.15\se{0.92} & 51.70\se{1.12} & 67.29\se{0.89} & 37.25\se{1.09} & 65.68\se{1.07} & 37.25\se{1.09} & 53.77\se{0.95} \\
& DMPO     & 50.30\se{1.12} & 77.90\se{0.93} & 50.30\se{1.12} & 66.54\se{0.89} & 28.23\se{1.01} & 60.43\se{1.10} & 28.23\se{1.01} & 47.02\se{0.93} \\
& DPO-k    & 56.30\se{1.11} & 80.55\se{0.89} & 56.30\se{1.11} & 70.71\se{0.86} & 51.01\se{1.12} & 75.71\se{0.96} & 51.01\se{1.12} & 65.41\se{0.92} \\
& S-DPO    & 55.60\se{1.11} & \textbf{81.35}\se{0.87} & 55.60\se{1.11} & 70.84\se{0.85} & \textbf{55.09}\se{1.12} & \textbf{78.18}\se{0.93} & \textbf{55.09}\se{1.12} & \textbf{68.64}\se{0.90} \\
& MASS-DPO & \textbf{57.05}\se{1.11} & 80.70\se{0.88} & \textbf{57.05}\se{1.11} & \textbf{71.08}\se{0.86} & 54.18\se{1.12} & 77.57\se{0.94} & 54.18\se{1.12} & 68.03\se{0.90} \\
\midrule
\multirow{5}{*}{Llama3}
& DPO      & 55.15\se{1.11} & 80.35\se{0.89} & 55.15\se{1.11} & 70.06\se{0.87} & 34.48\se{1.07} & 63.56\se{1.08} & 34.48\se{1.07} & 51.31\se{0.95} \\
& DMPO     & 49.95\se{1.12} & 78.35\se{0.92} & 49.95\se{1.12} & 66.76\se{0.88} & 27.82\se{1.01} & 58.72\se{1.11} & 27.82\se{1.01} & 45.69\se{0.94} \\
& DPO-k    & 56.05\se{1.11} & 80.30\se{0.89} & 56.05\se{1.11} & 70.41\se{0.87} & 43.95\se{1.11} & 70.77\se{1.02} & 43.95\se{1.11} & 59.69\se{0.94} \\
& S-DPO    & 56.50\se{1.11} & 80.85\se{0.88} & 56.50\se{1.11} & 70.95\se{0.86} & 48.94\se{1.12} & 73.39\se{0.99} & 48.94\se{1.12} & 63.25\se{0.94} \\
& MASS-DPO & \textbf{56.60}\se{1.11} & \textbf{81.15}\se{0.87} & \textbf{56.60}\se{1.11} & \textbf{71.17}\se{0.86} & \textbf{50.66}\se{1.12} & \textbf{76.01}\se{0.96} & \textbf{50.66}\se{1.12} & \textbf{65.57}\se{0.91} \\
\bottomrule
\end{tabular}
\label{tab:lastfm_movielens_allmethods}
\end{table*}

\subsection{How effectively does D-optimal active negative selection optimize the multi-negative preference learning objective?}
\label{sec:exp-rq1}

We compare MASS-DPO's active negative selection to the softmax-based random selection in S-DPO across all four datasets. Figure~\ref{fig:lastfm_medmcqa} and Figure~\ref{fig:qasc_movielens} (Appendix~\ref{app:exp}) track three alignment metrics during training: \emph{margin} (logit gap between preferred vs.\ rejected), \emph{accuracy}, and \emph{chosen rewards}. Across datasets, MASS-DPO (solid) achieves larger margins and faster early gains than S-DPO (dashed), with the gap emerging early and persisting through training. \emph{Accuracy} follows the same pattern: curves for MASS-DPO rise more quickly and attain higher plateaus. Finally, \emph{chosen-reward trajectories} under MASS-DPO are smoother and more stable across steps, while S-DPO exhibits noticeably noisier dynamics.

\begin{figure*}[t]
\centering
\includegraphics[width=0.85\linewidth]{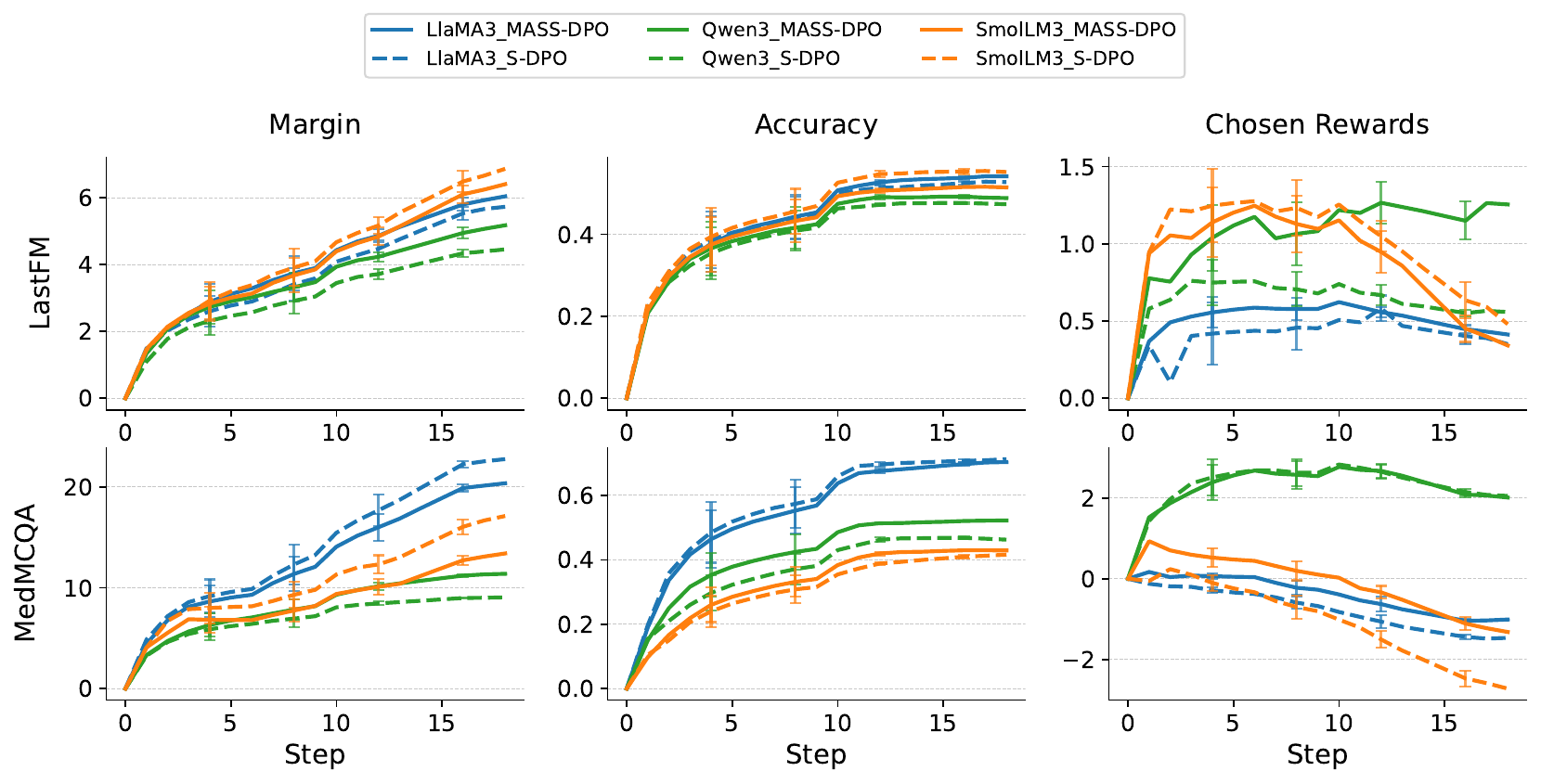}
\caption{Training dynamics on LastFM and MedMCQA. MASS-DPO (solid) achieves larger margins, higher accuracy, and more stable chosen rewards than S-DPO (dashed).}
\label{fig:lastfm_medmcqa}
\end{figure*}

\subsection{How does MASS-DPO improve downstream policy performance compared to existing preference optimization methods?}
\label{sec:exp-rq2}
We benchmark MASS-DPO against DPO, DMPO, DPO-k, and S-DPO on four datasets (MedMCQA, QASC, LastFM, MovieLens) using \emph{Accuracy}, reporting results for three base models in Table~\ref{table:results}. MASS-DPO achieves the highest average accuracy on Qwen3 and SmolLM3, leads both recommendation tasks on Llama3, and remains competitive on every dataset across all three model families.
Baselines without active negative selection (DPO, DMPO, and DPO-k) generally underperform, confirming that \emph{which} negatives enter the multi-negative objective matters. At matched wall-clock budgets (Table~\ref{tab:wallclock}, Appendix~\ref{app:compute}), MASS-DPO with a small selected subset matches or exceeds training with all available negatives (Table~\ref{tab:fullpool}), indicating that well-chosen negatives provide more useful signal per gradient step than the full pool.

\subsection{How informative are the negatives produced by incremental selection?}
\label{sec:exp-rq3}

We assess negative-selection quality using downstream utility metrics, MRR and Margin (Table~\ref{tab:mrr-margin-all}, Appendix~\ref{app:results}), and ranking quality on recommendation and QA via Recall/NDCG at $k\!\in\!\{1,3\}$ (main Table~\ref{table:results}; Appendix~\ref{app:results}, Tables~\ref{tab:lastfm_movielens_allmethods} and~\ref{tab:medmcqa_qasc_allmethods}). Across base models and datasets, MASS-DPO improves \emph{MRR} over S-DPO and delivers higher or comparable \emph{Margins}. On ranking metrics, MASS-DPO attains best or tied-best scores on most cells across all four metrics \{R@1, R@3, N@1, N@3\}, demonstrating stronger ranking quality on both recommendation and QA. These results confirm that active negative selection produces harder, more informative training pairs that translate into stronger ranking and alignment quality.

\begin{table*}
\small
\centering
\caption{MASS-DPO ablations on two hyperparameters. Each entry reports accuracy$_{\text{SE}}$, where the subscript denotes standard error. \textbf{Bold} = best. (a) Varying the scale $\beta\in\{0.1,0.5,1.0\}$ while holding $n=3$ fixed. (b) Varying $n\in\{1,3,5\}$ while holding $\beta=0.1$ fixed.}
\setlength{\tabcolsep}{3.5pt}
\renewcommand{\arraystretch}{1.05}

\begin{subtable}{.49\textwidth}
\centering
\caption{$\beta$ ablation}
\label{tab:beta_ablation}
\begin{adjustbox}{width=\linewidth}
\begin{tabular}{l|l|cccc}
\toprule
\multirow{1}{*}{Model} & \multirow{1}{*}{$\beta$} & Medmcqa & QASC & LastFM & MovieLens \\
\midrule
\multirow{3}{*}{Qwen3}
& 0.1 & \textbf{56.66\se{0.77}} & \textbf{72.19\se{1.05}} & \textbf{52.30\se{0.79}} & \textbf{47.58\se{0.80}} \\
& 0.5 & 46.29\se{0.79} & 71.41\se{1.06} & 48.15\se{0.81} & 39.82\se{0.79} \\
& 1.0 & 43.49\se{0.77} & 69.65\se{1.06} & 44.15\se{0.80} & 34.12\se{0.74} \\
\midrule
\multirow{3}{*}{SmolLM3}
& 0.1 & \textbf{44.19\se{0.79}} & \textbf{71.63\se{1.07}} & \textbf{57.25\se{0.79}} & 54.03\se{0.77} \\
& 0.5 & 39.73\se{0.76} & \textbf{71.63\se{1.03}} & 54.75\se{0.79} & \textbf{56.30\se{0.79}} \\
& 1.0 & 35.42\se{0.75} & 68.98\se{1.06} & 52.00\se{0.80} & 52.12\se{0.80} \\
\midrule
\multirow{3}{*}{Llama3}
& 0.1 & \textbf{71.29\se{0.74}} & \textbf{73.62\se{1.03}} & \textbf{57.35\se{0.81}} & 49.70\se{0.80} \\
& 0.5 & 69.69\se{0.74} & 73.51\se{1.01} & 55.75\se{0.80} & \textbf{51.06\se{0.78}} \\
& 1.0 & 66.28\se{0.76} & 72.19\se{1.02} & 52.25\se{0.81} & 45.92\se{0.78} \\
\bottomrule
\end{tabular}
\end{adjustbox}
\end{subtable}\hfill
\begin{subtable}{.49\textwidth}
\centering
\caption{Negatives $n$ ablation}
\label{tab:negatives_ablation}
\begin{adjustbox}{width=\linewidth}
\begin{tabular}{l|l|cccc}
\toprule
\multirow{1}{*}{Model} & \multirow{1}{*}{{$n$}} & Medmcqa & QASC & LastFM & MovieLens \\
\midrule
\multirow{3}{*}{Qwen3}
& 1 & 50.95\se{0.78} & 68.21\se{1.13} & 47.80\se{0.80} & 32.86\se{0.73} \\
& 3 & 56.66\se{0.77} & 72.19\se{1.05} & 52.30\se{0.79} & 47.58\se{0.80} \\
& 5 & \textbf{57.31\se{0.75}} & \textbf{73.73\se{1.03}} & \textbf{54.50\se{0.79}} & \textbf{58.11\se{0.78}} \\
\midrule
\multirow{3}{*}{SmolLM3}
& 1 & 29.26\se{0.72} & 65.67\se{1.09} & 50.70\se{0.81} & 34.58\se{0.75} \\
& 3 & 44.19\se{0.79} & \textbf{71.63\se{1.07}} & 57.25\se{0.79} & 54.03\se{0.77} \\
& 5 & \textbf{46.59\se{0.79}} & \textbf{71.63\se{1.04}} & \textbf{59.50\se{0.77}} & \textbf{65.07\se{0.74}} \\
\midrule
\multirow{3}{*}{Llama3}
& 1 & 46.99\se{0.78} & 71.96\se{1.03} & 52.70\se{0.81} & 32.71\se{0.73} \\
& 3 & 71.29\se{0.74} & 73.62\se{1.03} & 57.35\se{0.81} & 49.70\se{0.80} \\
& 5 & \textbf{73.55\se{0.71}} & \textbf{74.94\se{0.98}} & \textbf{60.05\se{0.80}} & \textbf{60.99\se{0.77}} \\
\bottomrule
\end{tabular}
\end{adjustbox}
\end{subtable}
\label{tab:ablations_side_by_side}
\end{table*}

\subsection{Ablation Studies}
\label{sec:ablations}

MASS-DPO's behavior is governed by the Fisher-information (\Cref{eq:fis}) and the D-optimal selection objective (\Cref{eq:d-opt}). We therefore ablate two key knobs predicted by theory to matter most: the preference-logit scale $\beta$ and the number of selected negatives $n$.

\paragraph{Effect of $\beta$.}
\label{sec:beta-ablation}
The scale $\beta$ is shared between the DPO training loss and the D-optimal selection objective, where it controls the sharpness of the softmax weights over candidates. Sweeping $\beta\!\in\!\{0.1,0.5,1.0\}$ across three model families (Table~\ref{tab:beta_ablation}), we find $\beta=0.1$ consistently yields the strongest results.

\paragraph{Number of negatives ($n$).}
\label{sec:k-ablation}
D-optimal design predicts that adding more negatives improves parameter estimation until coverage of the information space saturates. Varying the selected-negative budget $n\!\in\!\{1,3,5\}$ shows monotonic gains from $n{=}1\!\rightarrow\!3\!\rightarrow\!5$ across models and datasets  (Table~\ref{tab:negatives_ablation}). These results indicate the incremental sample selection procedure reliably assembles complementary negatives that expand $\log\det$ of the information matrix, aligning empirical improvements with our D-optimal design analysis.

\section{Conclusion}

We introduced MASS-DPO, a within-prompt active negative selection method for multi-negative preference optimization under the Plackett--Luce model. By deriving a PL-specific Fisher-information objective and formulating negative selection as a D-optimal design problem, MASS-DPO selects compact subsets that retain the full pool's information while reducing redundancy via an efficient incremental rank-one algorithm. Experiments across four benchmarks and three model families confirm that MASS-DPO delivers stronger alignment with substantially fewer negatives.

\bibliographystyle{plainnat}

\appendix

\section{Appendix}

\begin{lemma}[Gradient Derivation]
\label{app:gradient_derivation}
Consider the loss for a single sample
\begin{equation}
L(\theta) = -\log \sigma\Bigl(Z(\theta)\Bigr), \quad \text{with} \quad Z(\theta) = -\log \left( \sum_{j\in S_n} \exp\Bigl[\beta\left(\phi_j^\top \theta + b_j\right)\Bigr] \right),
\end{equation}

\[
\frac{d}{dz}\bigl[-\log \sigma(Z(\theta))\bigr] = -\frac{1}{\sigma(Z(\theta))} \cdot \sigma'(Z(\theta))
= -\frac{\sigma(Z(\theta))(1-\sigma(Z(\theta)))}{\sigma(Z(\theta))} = -(1-\sigma(Z(\theta))).
\]
\begin{equation}
\frac{\partial L}{\partial Z(\theta)} = -(1-\sigma(Z(\theta))).
\end{equation}

\[
A(\theta) = \sum_{j\in S_n} \exp\Bigl[\beta\left(\phi_j^\top \theta + b_j\right)\Bigr],
\]
so that \(Z(\theta) = -\log A(\theta)\). Then,
\[
\frac{\partial Z(\theta)}{\partial \theta} = -\frac{1}{A(\theta)} \frac{\partial A(\theta)}{\partial \theta},
\]
\[
\frac{\partial A(\theta)}{\partial \theta} = \sum_{j\in S_n}  \exp\Bigl[\beta\left(\phi_j^\top \theta + b_j\right)\Bigr] \beta\phi_j,
\]
\[
\frac{\partial Z(\theta)}{\partial \theta} = -\beta \sum_{j\in S_n} \frac{\exp\Bigl[\beta\left(\phi_j^\top \theta + b_j\right)\Bigr]}{A(\theta)} \phi_j 
= -\beta \sum_{j\in S_n} q_j^{S_n}(\theta) \phi_j,
\]
where the softmax weights are defined as
\[
q_j^{S_n}(\theta) = \frac{\exp\Bigl[\beta\left(\phi_j^\top \theta + b_j\right)\Bigr]}{A(\theta)}.
\]

\[
\frac{\partial L}{\partial \theta} = \frac{\partial L}{\partial Z(\theta)}\cdot\frac{\partial Z(\theta)}{\partial \theta}
= -(1-\sigma(Z(\theta))) \cdot \Bigl[-\beta \sum_{j\in S_n} q_j^{S_n}(\theta) \phi_j\Bigr]
= \beta (1-\sigma(Z(\theta))) \sum_{j\in S_n} q_j^{S_n}(\theta) \phi_j.
\]
Thus, the gradient of the loss is
\begin{equation}
\nabla_\theta L = \beta (1-\sigma(Z(\theta))) \sum_{j\in S_n} q_j^{S_n}(\theta) \phi_j.
\end{equation}
Equivalently, defining $\bar\phi_{S_n}(\theta)=\sum_{j\in S_n} q_j^{S_n}(\theta)\phi_j$, we have
$\nabla_\theta L=\beta(1-\sigma(Z(\theta)))\bar\phi_{S_n}(\theta)$.
\end{lemma}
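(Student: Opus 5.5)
The statement is the gradient formula of Lemma~\ref{lemma:gradient}, re-derived as Lemma~\ref{app:gradient_derivation}. I would obtain it by a two-stage chain rule. The loss factors as the outer map $z\mapsto -\log\sigma(z)$ composed with the inner map $\theta\mapsto Z(\theta) = -\log A(\theta)$, where $A(\theta)=\sum_{j\in S_n}\exp[\beta(\phi_j^\top\theta+b_j)]$. I would differentiate each stage separately and then multiply the results.

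\emph{Outer derivative.} I would use the identity $\sigma'(z)=\sigma(z)(1-\sigma(z))$, which follows directly from $\sigma(z)=1/(1+e^{-z})$. It gives
\[
\frac{d}{dz}\bigl[-\log\sigma(z)\bigr] = -\frac{\sigma'(z)}{\sigma(z)} = -\bigl(1-\sigma(z)\bigr).
\]

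\emph{Inner derivative.} Since $Z=-\log A$, we have $\nabla_\theta Z = -A^{-1}\nabla_\theta A$. Differentiating each exponential term, with $\beta$ and $b_j$ constant in $\theta$, gives
\[
\nabla_\theta A = \beta\sum_{j\in S_n}\exp[\beta(\phi_j^\top\theta+b_j)]\,\phi_j .
\]
Dividing by $A$ turns the coefficients into exactly the subset softmax weights $q_j^{S_n}(\theta)$ of \Cref{eq:softmax-weights}. Hence $\nabla_\theta Z = -\beta\sum_{j\in S_n} q_j^{S_n}(\theta)\,\phi_j$.

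\emph{Combination.} Multiplying the two factors, the two minus signs cancel, and we obtain
\[
\nabla_\theta L = \beta\bigl(1-\sigma(Z_n)\bigr)\sum_{j\in S_n} q_j^{S_n}(\theta)\,\phi_j = \beta\bigl(1-\sigma(Z_n)\bigr)\,\bar\phi_{S_n}(\theta),
\]
which is \Cref{eq:grad}.

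There is no real obstacle here. The only point needing care is sign bookkeeping: $Z$ carries a leading minus sign and so does the outer derivative, and they must cancel. I would also state explicitly that $b_j$ does not depend on $\theta$, since it is built from $\pi_{\mathrm{ref}}$, so no extra gradient terms appear. Finally, the log-linear form of Assumption~\ref{assump:log-linear} is what makes the logit $\beta(\phi_j^\top\theta+b_j)$ linear in $\theta$; this linearity is what keeps the per-term gradient equal to $\beta\phi_j$.
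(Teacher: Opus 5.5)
Your proposal is correct and follows the paper's derivation essentially step for step. It uses the same outer derivative $-(1-\sigma(Z))$ via $\sigma'=\sigma(1-\sigma)$, the same inner derivative $\nabla_\theta Z=-A^{-1}\nabla_\theta A=-\beta\sum_{j\in S_n}q_j^{S_n}(\theta)\phi_j$, and the same cancellation of the two minus signs when the factors are multiplied.
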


\begin{lemma}[Hessian Derivation]
\label{app:hessian_derivation}

Recall the multi-negative DPO loss:
\[
L(\theta; S_n) = -\log\sigma\left(-\log\sum_{i \in S_n}\exp(\beta(\phi_i^\top\theta + b_i))\right),
\]
where $\sigma(\cdot)$ denotes the sigmoid function. Throughout this proof we abbreviate the subset-normalized weights and subset mean from~\Cref{lemma:hessian} as $q_j:=q_j^{S_n}(\theta)$ and $\bar\phi:=\bar\phi_{S_n}(\theta)$, and write
\[
Z_n = -\log\sum_{i \in S_n}\exp(\beta(\phi_i^\top\theta + b_i)), \quad q_j = \frac{\exp(\beta(\phi_j^\top\theta + b_j))}{\sum_{k \in S_n}\exp(\beta(\phi_k^\top\theta + b_k))}, \quad \bar\phi = \sum_{j \in S_n} q_j\phi_j.
\]

Starting from the gradient~\Cref{eq:grad},
\[
\nabla_\theta L(\theta; S_n) = \beta(1 - \sigma(Z_n)) \bar\phi,
\]
we derive the Hessian by differentiating again with respect to $\theta$:
\begin{align}
\nabla_\theta^2 L(\theta; S_n)
&= \beta\nabla_\theta\left[(1 - \sigma(Z_n))\bar\phi\right] \\
&= \beta(1-\sigma(Z_n))\nabla_\theta\bar\phi
 - \beta\sigma(Z_n)(1-\sigma(Z_n))\bar\phi\,\nabla_\theta Z_n^\top.
\end{align}

Expanding the first term using the definition of $q_j$ gives:
\begin{align}
\nabla_\theta\bar\phi
&= \beta\sum_{j\in S_n}q_j\phi_j\phi_j^\top - \beta\left(\sum_{j\in S_n}q_j\phi_j\right)\left(\sum_{j\in S_n}q_j\phi_j\right)^\top\\
&= \beta\sum_{j\in S_n}q_j(\phi_j-\bar\phi)(\phi_j-\bar\phi)^\top.
\end{align}

Note also that:
\[
\nabla_\theta Z_n = -\beta\sum_{j\in S_n}q_j\phi_j = -\beta\bar\phi.
\]

Thus, substituting back, the Hessian becomes:
\begin{align}
\nabla_\theta^2 L(\theta; S_n) 
&= \beta^2(1 - \sigma(Z_n))\sum_{j\in S_n}q_j(\phi_j-\bar\phi)(\phi_j-\bar\phi)^\top + \beta^2\sigma(Z_n)(1-\sigma(Z_n))\bar\phi\bar\phi^\top \\
&= \beta^2(1 - \sigma(Z_n))\left[\sigma(Z_n)\,\bar\phi\bar\phi^\top + \sum_{j\in S_n}q_j(\phi_j-\bar\phi)(\phi_j-\bar\phi)^\top\right].
\end{align}
\end{lemma}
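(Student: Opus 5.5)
The statement to prove is the Hessian formula of Lemma~\ref{app:hessian_derivation} (the appendix version of Lemma~\ref{lemma:hessian}). The plan is to differentiate the gradient of Lemma~\ref{app:gradient_derivation} once more. Write that gradient as $\nabla_\theta L=\beta(1-\sigma(Z_n))\bar\phi$, a scalar function times a vector function of $\theta$. The product rule then splits the Hessian into two pieces:
\[
\beta(1-\sigma(Z_n))\,\nabla_\theta\bar\phi \qquad\text{and}\qquad \beta\,\bar\phi\,\nabla_\theta\bigl(1-\sigma(Z_n)\bigr)^\top .
\]
I will compute each Jacobian separately and then recombine.

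\textbf{Scalar factor.} Since $\sigma'=\sigma(1-\sigma)$, we have $\nabla_\theta(1-\sigma(Z_n))=-\sigma(Z_n)(1-\sigma(Z_n))\nabla_\theta Z_n$. The gradient derivation already gives $\nabla_\theta Z_n=-\beta\bar\phi$. So the second piece equals $\beta^2\sigma(Z_n)(1-\sigma(Z_n))\bar\phi\bar\phi^\top$, and the two minus signs cancel.

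\textbf{Softmax mean.} Here I need the Jacobian of the softmax weights, $\nabla_\theta q_j=\beta q_j(\phi_j-\bar\phi)$. It follows from differentiating $\log q_j=\beta(\phi_j^\top\theta+b_j)-\log A(\theta)$ and using $\nabla\log A=\beta\bar\phi$. Then
\[
\nabla_\theta\bar\phi=\sum_j \phi_j(\nabla q_j)^\top=\beta\sum_j q_j\phi_j(\phi_j-\bar\phi)^\top .
\]
Because $\sum_j q_j(\phi_j-\bar\phi)=0$, I can subtract $\bar\phi\sum_j q_j(\phi_j-\bar\phi)^\top=0$. This turns the sum into the centered covariance $\beta\sum_j q_j(\phi_j-\bar\phi)(\phi_j-\bar\phi)^\top$, which is symmetric, as a Hessian block must be. The centering trick is the only step needing care, since it is what makes the result symmetric and PSD.

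\textbf{Combining.} Adding the two pieces and factoring out $\beta^2(1-\sigma(Z_n))$ gives the stated bracket. For the Loewner bound in Lemma~\ref{lemma:hessian}, note two facts. First, $\sigma(Z_n)\in(0,1)$, so $\sigma(Z_n)\bar\phi\bar\phi^\top\succeq0$. Second, $\beta^2(1-\sigma(Z_n))>0$. Dropping the rank-one term therefore preserves the ordering.

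I expect no genuine obstacle. The main risk is sign bookkeeping: $Z_n$ is defined with a minus sign, and $\partial L/\partial Z_n=-(1-\sigma)$. I will check both cross terms against the scalar case $|S_n|=1$, where the covariance vanishes and the Hessian must reduce to the logistic-loss Hessian $\beta^2\sigma(1-\sigma)\phi\phi^\top$.
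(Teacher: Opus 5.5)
Your proposal is correct and follows essentially the same route as the paper: apply the product rule to $\beta(1-\sigma(Z_n))\bar\phi$, use $\nabla_\theta Z_n=-\beta\bar\phi$, and rewrite the Jacobian of the softmax mean as the $q$-weighted centered covariance. Deriving $\nabla_\theta q_j=\beta q_j(\phi_j-\bar\phi)$ explicitly, spelling out the centering step, and checking the $|S_n|=1$ logistic case make the paper's terse steps more transparent without changing the argument.
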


\begin{remark}[Loewner lower bound in \Cref{eq:hessian_lb}]
\label{rem:eq10_justification}

From the decomposition above,
\[
\nabla_\theta^2 L(\theta; S_n)
=
\beta^2(1-\sigma(Z_n))
\Bigl[
\sigma(Z_n)\,\bar\phi\bar\phi^\top
+
\sum_{j\in S_n}q_j(\phi_j-\bar\phi)(\phi_j-\bar\phi)^\top
\Bigr].
\]
The rank one term $\sigma(Z_n)\,\bar\phi\bar\phi^\top$ is positive semidefinite, and the weighted covariance term is also positive semidefinite. Therefore dropping the rank one term yields the Loewner lower bound
\[
\nabla_\theta^2 L(\theta; S_n)
\succeq
\beta^2(1-\sigma(Z_n))
\sum_{j\in S_n}q_j(\phi_j-\bar\phi)(\phi_j-\bar\phi)^\top,
\]
which is~\Cref{eq:hessian_lb}.
\end{remark}

\begin{thm}[Selected-Subset Estimator Stability]
\label{app:1-parameter-estimation}
Let
\[
F_{\mathcal C}(\theta)=L(\theta;\mathcal C)+\frac{\gamma}{2}\|\theta\|_2^2,
\qquad
F_{S_n}(\theta)=L(\theta;S_n)+\frac{\gamma}{2}\|\theta\|_2^2,
\]
and let $\theta_*=\argmin_{\theta\in\mathbb R^d}F_{\mathcal C}(\theta)$ and
$\hat\theta_n=\argmin_{\theta\in\mathbb R^d}F_{S_n}(\theta)$. Define the integrated subset curvature
\[
\bar\Sigma_n=\int_0^1 \nabla^2 F_{S_n}\bigl(\theta_*+t(\hat\theta_n-\theta_*)\bigr)\,dt .
\]
If $\bar\Sigma_n\succeq\Sigma_0\succ0$, then
\[
\|\hat\theta_n-\theta_*\|_{\Sigma_0}
\leq
\|\nabla F_{S_n}(\theta_*)\|_{\Sigma_0^{-1}}
=
\|\nabla L(\theta_*;S_n)-\nabla L(\theta_*;\mathcal C)\|_{\Sigma_0^{-1}} .
\]
Thus an estimator-stability event of the form used in~\Cref{thm:logit-error} is guaranteed whenever the selected-subset gradient discrepancy on the right is at most $\eta_n$ in the corresponding dual norm.
\end{thm}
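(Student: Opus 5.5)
The plan is a first-order optimality argument combined with the integral form of the mean value theorem applied to the gradient of $F_{S_n}$. Both $F_{\mathcal C}$ and $F_{S_n}$ are smooth and $\gamma$-strongly convex, because the Hessian in \Cref{lemma:hessian} is positive semidefinite for any subset and the ridge contributes $\gamma I$. Hence both minimizers exist, are unique, and satisfy $\nabla F_{\mathcal C}(\theta_*)=0$ and $\nabla F_{S_n}(\hat\theta_n)=0$.

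\textbf{Step 1 (the identity on the right).} Using $\nabla F_{\mathcal C}(\theta_*)=0$, i.e.\ $\gamma\theta_*=-\nabla L(\theta_*;\mathcal C)$, we get
\[
\nabla F_{S_n}(\theta_*)=\nabla L(\theta_*;S_n)+\gamma\theta_*=\nabla L(\theta_*;S_n)-\nabla L(\theta_*;\mathcal C).
\]
This gives the stated equality directly; the explicit gradients are those of \Cref{lemma:gradient}.

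\textbf{Step 2 (integral mean value).} Set $\Delta=\hat\theta_n-\theta_*$. Since $F_{S_n}$ is $C^2$, the fundamental theorem of calculus along the segment gives
\[
\nabla F_{S_n}(\hat\theta_n)-\nabla F_{S_n}(\theta_*)=\int_0^1\nabla^2F_{S_n}(\theta_*+t\Delta)\,dt\,\Delta=\bar\Sigma_n\Delta.
\]
Because $\nabla F_{S_n}(\hat\theta_n)=0$, this becomes $\bar\Sigma_n\Delta=-\nabla F_{S_n}(\theta_*)=:-g$.

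\textbf{Step 3 (from $\bar\Sigma_n$ to $\Sigma_0$).} First,
\[
\|\Delta\|_{\Sigma_0}^2=\Delta^\top\Sigma_0\Delta\le\Delta^\top\bar\Sigma_n\Delta=-\Delta^\top g.
\]
Cauchy--Schwarz in the $\Sigma_0$ / $\Sigma_0^{-1}$ pairing then gives $|\Delta^\top g|\le\|\Delta\|_{\Sigma_0}\|g\|_{\Sigma_0^{-1}}$. Dividing by $\|\Delta\|_{\Sigma_0}$ (the case $\Delta=0$ is trivial) yields
\[
\|\Delta\|_{\Sigma_0}\le\|g\|_{\Sigma_0^{-1}}.
\]

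This route avoids the pitfall of writing $\Delta=-\bar\Sigma_n^{-1}g$ and trying to bound $\|\bar\Sigma_n^{-1}g\|_{\Sigma_0}$. That quantity is not monotone in the Loewner order in general: one would need $\bar\Sigma_n^{-1}\Sigma_0\bar\Sigma_n^{-1}\preceq\Sigma_0^{-1}$, which does not follow from $\bar\Sigma_n\succeq\Sigma_0$ for non-commuting matrices. The quadratic-form argument in Step 3 sidesteps this entirely.

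The only other point needing care is justifying Step 2 rigorously. This requires the Hessian formula of \Cref{lemma:hessian} to be continuous along the segment, which holds since the loss is a smooth composition of log-sum-exp with the log-sigmoid. The concluding sentence of the statement then follows immediately: if the gradient discrepancy is at most $\eta_n$ in the $\Sigma_0^{-1}$-norm, then $\|\hat\theta_n-\theta_*\|_{\Sigma_0}\le\eta_n$, which is the estimator-stability event used in \Cref{thm:logit-error}.
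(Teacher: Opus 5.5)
Your proof is correct. Steps 1--2 follow the paper's proof: the first-order condition $\nabla F_{S_n}(\hat\theta_n)=0$, the fundamental theorem of calculus along the segment giving $0=\nabla F_{S_n}(\theta_*)+\bar\Sigma_n(\hat\theta_n-\theta_*)$, and $\nabla F_{\mathcal C}(\theta_*)=0$ for the identity. Your remark that the Hessian of \Cref{lemma:hessian} is positive semidefinite, so both objectives are $\gamma$-strongly convex, usefully justifies existence and uniqueness of the minimizers, which the paper takes for granted.

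The only divergence is the last step. The paper writes $\Delta=-\bar\Sigma_n^{-1}g$ and concludes directly from $\bar\Sigma_n\succeq\Sigma_0$. You instead use $\Delta^\top\Sigma_0\Delta\le\Delta^\top\bar\Sigma_n\Delta=-\Delta^\top g$ together with Cauchy--Schwarz. Your version is slightly more economical, since it uses the Loewner ordering only along the single direction $\Delta$ and never inverts $\bar\Sigma_n$.

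However, your claim that the paper's route is a pitfall is wrong. Set $M=\Sigma_0^{1/2}\bar\Sigma_n^{-1}\Sigma_0^{1/2}$. Operator monotonicity of inversion gives $\bar\Sigma_n^{-1}\preceq\Sigma_0^{-1}$, hence $0\preceq M\preceq I$ and $M^2\preceq I$. Then $\bar\Sigma_n^{-1}\Sigma_0\bar\Sigma_n^{-1}=\Sigma_0^{-1/2}M^2\Sigma_0^{-1/2}\preceq\Sigma_0^{-1}$, with no commutativity needed. Equivalently, $\|\bar\Sigma_n^{-1}g\|_{\Sigma_0}=\|M\,\Sigma_0^{-1/2}g\|_2\le\|g\|_{\Sigma_0^{-1}}$, so the paper's one-line conclusion is valid. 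What genuinely fails for non-commuting matrices is monotonicity of maps such as $A\mapsto A^2$, not this inequality, so you should drop that remark.
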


\begin{proof}
The first-order condition for the selected regularized objective gives $\nabla F_{S_n}(\hat\theta_n)=0$. By the fundamental theorem of calculus,
\[
0=\nabla F_{S_n}(\theta_*)+\bar\Sigma_n(\hat\theta_n-\theta_*).
\]
Therefore $\hat\theta_n-\theta_*=-\bar\Sigma_n^{-1}\nabla F_{S_n}(\theta_*)$. Since $\bar\Sigma_n\succeq\Sigma_0$, the stated norm bound follows. Finally, $\nabla F_{\mathcal C}(\theta_*)=0$, so $\nabla F_{S_n}(\theta_*)=\nabla L(\theta_*;S_n)-\nabla L(\theta_*;\mathcal C)$.
\end{proof}

\begin{thm}[Batch Estimator Stability]
\label{app:k-parameter-estimation}
For an analysis block of $k$ prompts, define the averaged full-pool and selected-subset losses
\[
L_k(\theta;\mathcal C_{1:k})=\frac{1}{k}\sum_{i=1}^k L_i(\theta;\mathcal C_i),
\qquad
L_k(\theta;S_{1:k,n})=\frac{1}{k}\sum_{i=1}^k L_i(\theta;S_{i,n}).
\]
Let $F_{\mathcal C,k}(\theta)=L_k(\theta;\mathcal C_{1:k})+\frac{\gamma}{2}\|\theta\|_2^2$ and
$F_{S,k}(\theta)=L_k(\theta;S_{1:k,n})+\frac{\gamma}{2}\|\theta\|_2^2$, with minimizers
$\theta_{*,k}$ and $\hat\theta_{k,n}$ respectively. If the integrated Hessian of $F_{S,k}$ along the segment from $\theta_{*,k}$ to $\hat\theta_{k,n}$ lower-bounds $\Sigma_{0,k}\succ0$, then
\[
\|\hat\theta_{k,n}-\theta_{*,k}\|_{\Sigma_{0,k}}
\leq
\|\nabla L_k(\theta_{*,k};S_{1:k,n})-\nabla L_k(\theta_{*,k};\mathcal C_{1:k})\|_{\Sigma_{0,k}^{-1}}.
\]
Moreover,
\[
\nabla L_k(\theta;S_{1:k,n})
=
\frac{\beta}{k}\sum_{i=1}^k(1-\sigma(Z_i(\theta)))
\sum_{j\in S_{i,n}}q_{i,j}^{S_{i,n}}(\theta)\phi_{i,j}.
\]
\end{thm}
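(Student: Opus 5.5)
The argument mirrors the proof of \Cref{app:1-parameter-estimation}, applied to the averaged objectives. There are two claims: the norm bound, and the explicit form of the batch gradient. I would prove the norm bound first and then obtain the gradient formula from \Cref{lemma:gradient}.

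\textbf{Step 1: first-order conditions.} Each $L_i(\cdot;S_{i,n})$ is convex in $\theta$, since its Hessian in \Cref{lemma:hessian} is a nonnegative combination of PSD matrices. Adding the ridge term $\frac{\gamma}{2}\|\theta\|_2^2$ with $\gamma>0$ makes $F_{S,k}$ and $F_{\mathcal C,k}$ strongly convex. Hence both minimizers exist, are unique, and satisfy $\nabla F_{S,k}(\hat\theta_{k,n})=0$ and $\nabla F_{\mathcal C,k}(\theta_{*,k})=0$.

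\textbf{Step 2: integral form of the gradient difference.} Write
\[
\bar\Sigma_{k}=\int_0^1\nabla^2F_{S,k}\bigl(\theta_{*,k}+t(\hat\theta_{k,n}-\theta_{*,k})\bigr)\,dt .
\]
The fundamental theorem of calculus along the segment from $\theta_{*,k}$ to $\hat\theta_{k,n}$ gives
\[
0=\nabla F_{S,k}(\hat\theta_{k,n})=\nabla F_{S,k}(\theta_{*,k})+\bar\Sigma_k(\hat\theta_{k,n}-\theta_{*,k}).
\]
By hypothesis $\bar\Sigma_k\succeq\Sigma_{0,k}\succ0$, so $\bar\Sigma_k$ is invertible and
\[
\hat\theta_{k,n}-\theta_{*,k}=-\bar\Sigma_k^{-1}\nabla F_{S,k}(\theta_{*,k}).
\]

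\textbf{Step 3: the norm bound.} This is the only step that needs care; it is a short Loewner argument rather than a real obstacle. Set $g=\nabla F_{S,k}(\theta_{*,k})$. Then
\[
\|\hat\theta_{k,n}-\theta_{*,k}\|_{\Sigma_{0,k}}^2=g^\top\bar\Sigma_k^{-1}\Sigma_{0,k}\bar\Sigma_k^{-1}g .
\]
From $\Sigma_{0,k}\preceq\bar\Sigma_k$, congruence by $\bar\Sigma_k^{-1}$ gives $\bar\Sigma_k^{-1}\Sigma_{0,k}\bar\Sigma_k^{-1}\preceq\bar\Sigma_k^{-1}$. Operator monotonicity of inversion gives $\bar\Sigma_k^{-1}\preceq\Sigma_{0,k}^{-1}$. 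Chaining the two yields $\|\hat\theta_{k,n}-\theta_{*,k}\|_{\Sigma_{0,k}}\le\|g\|_{\Sigma_{0,k}^{-1}}$.

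To identify $g$, use $\nabla F_{\mathcal C,k}(\theta_{*,k})=0$, which says $\gamma\theta_{*,k}=-\nabla L_k(\theta_{*,k};\mathcal C_{1:k})$. Substituting,
\[
g=\nabla L_k(\theta_{*,k};S_{1:k,n})+\gamma\theta_{*,k}=\nabla L_k(\theta_{*,k};S_{1:k,n})-\nabla L_k(\theta_{*,k};\mathcal C_{1:k}),
\]
which is the stated right-hand side.

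\textbf{Step 4: the gradient formula.} The gradient is linear over the average, so $\nabla L_k(\theta;S_{1:k,n})=\frac1k\sum_i\nabla L_i(\theta;S_{i,n})$. Applying \Cref{lemma:gradient} (derived in \Cref{app:gradient_derivation}) to each prompt $i$ gives $\beta(1-\sigma(Z_i(\theta)))\sum_{j\in S_{i,n}}q_{i,j}^{S_{i,n}}(\theta)\phi_{i,j}$, with that prompt's own $Z_i$, softmax weights and feature differences. Averaging these terms gives the displayed expression.
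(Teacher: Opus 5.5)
Your proposal is correct and takes essentially the paper's route. The paper also reduces to the single-prompt argument (first-order condition, fundamental theorem of calculus along the segment, $\bar\Sigma_k\succeq\Sigma_{0,k}$, and $\nabla F_{\mathcal C,k}(\theta_{*,k})=0$) and gets the gradient formula by differentiating the average term by term; your congruence and operator-monotonicity justification of the norm step spells out a detail the paper only asserts.
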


\begin{proof}
The proof is identical to the single-prompt perturbation argument in~\Cref{app:1-parameter-estimation}, with $F_{\mathcal C}$ and $F_{S_n}$ replaced by the averaged objectives $F_{\mathcal C,k}$ and $F_{S,k}$. The displayed gradient follows by differentiating the average loss term by term, which introduces the factor $1/k$.
\end{proof}

\section{Formal Relative Logit Error Bound}
\label{app:thm-formal}

\begin{thm}[Relative Logit Error Bound --- Formal Version of Theorem~\ref{thm:logit-error}]\ \\
\label{thm:logit-error-formal}
For a fixed prompt, let $\mathcal C=\{y_i\}_{i=1}^N$ be its candidate-negative pool. Let
\[
\theta_* = \argmin_{\theta\in\mathbb R^d}\Bigl[L(\theta;\mathcal C)+\frac{\gamma}{2}\|\theta\|_2^2\Bigr],
\qquad
\hat{\theta}_n = \argmin_{\theta\in\mathbb R^d}\Bigl[L(\theta;S_n)+\frac{\gamma}{2}\|\theta\|_2^2\Bigr],
\]
where $S_n\subseteq\mathcal C$ is the selected subset.

Define the fixed full-pool weights used by Algorithm~\ref{alg:dopt},
\[
q_i^0 =
\frac{\exp(\beta(\phi_i^\top\theta_0+b_i))}
{\sum_{\ell\in\mathcal C}\exp(\beta(\phi_\ell^\top\theta_0+b_\ell))},
\qquad
\bar\phi_0=\sum_{i\in\mathcal C}q_i^0\phi_i,
\]
with $Z_{\mathcal C}^0=-\log\sum_{\ell\in\mathcal C}\exp[\beta(\phi_\ell^\top\theta_0+b_\ell)]$. Set $\tilde\phi_i^0=\phi_i-\bar\phi_0$, $v_i^0=\sqrt{q_i^0}\tilde\phi_i^0$,
\[
I_n=\{i:y_i\in S_n\},\qquad
H_n^0=\gamma I+\alpha_0\sum_{i\in I_n}v_i^0(v_i^0)^\top,\qquad
\alpha_0=\beta^2(1-\sigma(Z_\mathcal C^0)).
\]
Let $q_{\min}^0=\min_{i\in\mathcal C}q_i^0$, $L_v^0=\max_{i\in\mathcal C}\|v_i^0\|_2^2$, and define
\[
B_n =
\frac{\kappa}{\rho q_{\min}^0}
\cdot
\frac{1+\alpha_0 L_v^0/\gamma}{\alpha_0}
\cdot
\frac{d}{n}
\log\!\left(1+\frac{\alpha_0 nL_v^0}{\gamma d}\right).
\]
Under Assumptions~\ref{assump:boundedness}--\ref{assump:fisher-compat}, with probability at least $1-\delta$ over the stochastic preference observations drawn from the PL model,
\[
\mathcal{E}_{\mathrm{rel}}(\hat\theta_n,\theta_*)
\;\leq\;
\widetilde O\!\left(d\sqrt{\frac{\log(1/\delta)}{n}}\right),
\]
where $\widetilde O$ hides logarithmic factors and the candidate-pool regularity constants $\rho, q_{\min}^0, \kappa, \alpha_0, \gamma, L_v^0$.
\end{thm}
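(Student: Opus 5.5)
The plan is to reduce the relative logit error to a weighted-norm bound on $\hat\theta_n-\theta_*$, and then control that norm with the design matrix $H_n^0$ built by Algorithm~\ref{alg:dopt}.

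\textbf{Step 1: reduce to a leverage term.} For any $i,j\in\mathcal C$, write $\phi_i-\phi_j=\tilde\phi_i^0-\tilde\phi_j^0$. By Cauchy--Schwarz in the $H_n^0$-geometry,
\[
\bigl|(\phi_i-\phi_j)^\top(\hat\theta_n-\theta_*)\bigr|\le 2\max_{\ell\in\mathcal C}\|\tilde\phi_\ell^0\|_{(H_n^0)^{-1}}\;\|\hat\theta_n-\theta_*\|_{H_n^0}.
\]
Since $\tilde\phi_\ell^0=v_\ell^0/\sqrt{q_\ell^0}$, the first factor is at most $(q_{\min}^0)^{-1/2}\max_\ell\|v_\ell^0\|_{(H_n^0)^{-1}}$. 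This isolates a leverage term $\max_\ell (v_\ell^0)^\top(H_n^0)^{-1}v_\ell^0$ and an estimation term.

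\textbf{Step 2: bound the leverage term via the greedy rule (the leverage bound of Appendix~\ref{app:leverage}).} Because each step picks the candidate with the largest score $(v_i^0)^\top H_{k-1}^{-1}v_i^0$, and $H_{k-1}\preceq H_n^0$, for every $\ell$ we have
\[
(v_\ell^0)^\top(H_n^0)^{-1}v_\ell^0\le \min_k (v_{i_k}^0)^\top H_{k-1}^{-1}v_{i_k}^0 .
\]
This holds for unselected $\ell$; for selected $\ell$, a term $1/\alpha_0$ dominates. Taking the minimum over $k$ and bounding it by the average gives
\[
\le \frac1n\sum_k (v_{i_k}^0)^\top H_{k-1}^{-1}v_{i_k}^0 .
\]
Next use $x\le \frac{1+\alpha_0L_v^0/\gamma}{\alpha_0}\log(1+\alpha_0 x)$ for $x\le L_v^0/\gamma$. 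The sum then telescopes through \Cref{eq:mdl_incremental} into $\log\det H_n^0-\log\det(\gamma I)$, which the AM--GM trace bound caps at $d\log\bigl(1+\alpha_0 nL_v^0/(\gamma d)\bigr)$. Multiplying by $\kappa/(\rho q_{\min}^0)$, which are the diversity and Fisher-compatibility constants that pass from the reference-point design to the true curvature, gives exactly $B_n$. Hence the leverage factor is at most $\sqrt{B_n}=\widetilde O(\sqrt{d/n})$.

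\textbf{Step 3: the estimation term.} Invoke \Cref{app:1-parameter-estimation} with $\Sigma_0=\rho H_n^0$, or a compatible multiple of it. The Fisher-compatibility assumption is meant to guarantee $\bar\Sigma_n\succeq\Sigma_0$ along the segment from $\theta_*$ to $\hat\theta_n$, using \Cref{lemma:hessian}'s Loewner bound to compare $\nabla^2 L(\cdot;S_n)$ with $\alpha_0\sum_{i\in I_n}v_i^0(v_i^0)^\top$.

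It then remains to bound the gradient discrepancy $\|\nabla L(\theta_*;S_n)-\nabla L(\theta_*;\mathcal C)\|_{(H_n^0)^{-1}}$. By \Cref{lemma:gradient}, this is a difference of $\beta(1-\sigma)$-scaled softmax means of feature differences. Treating the PL observations as a generalized linear model, a self-normalized martingale inequality in the style of Abbasi-Yadkori et al. bounds the stochastic part by
\[
\sqrt{d\log\bigl(1+\alpha_0 nL_v^0/(\gamma d)\bigr)+\log(1/\delta)}
\]
with probability $1-\delta$. The boundedness assumptions plus the $\gamma$ regularization absorb the deterministic bias.

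\textbf{Combining.} The product of the two factors is
\[
\sqrt{B_n}\cdot\widetilde O\bigl(\sqrt{d\log(1/\delta)}\bigr)=\widetilde O\bigl(d\sqrt{\log(1/\delta)/n}\bigr).
\]

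\textbf{Main obstacle.} The hardest part is Step 3, on two counts. First, the curvature of the subset loss is evaluated at $\theta$ along the segment and with subset-normalized weights $q^{S_n}$, while $H_n^0$ uses full-pool weights at $\theta_0$. Second, the gradient discrepancy has a deterministic, nonrandom component. I would handle both by leaning on the estimator-stability and Fisher-compatibility assumptions: state them as $\bar\Sigma_n\succeq \rho H_n^0/\kappa$ and a discrepancy bound of order $\eta_n$. The proof's real content then lies in the leverage argument of Step 2.
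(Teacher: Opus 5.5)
Your outline matches the paper's proof. It has the same three pieces: Cauchy--Schwarz on $\phi_i-\phi_j=\tilde\phi_i^0-\tilde\phi_j^0$, the leverage bound of Theorem~\ref{thm:leverage}, and a self-normalized concentration input for the estimation term. The leverage bound uses greedy maximality, $H_{k-1}^0\preceq H_n^0$, minimum bounded by average, the determinant lemma with $\log(1+\alpha_0x)\ge\alpha_0x/(1+\alpha_0L_v^0/\gamma)$, and the trace bound. However, the leverage step has a gap for \emph{selected} candidates. Write $x_k=(v_{i_k}^0)^\top(H_{k-1}^0)^{-1}v_{i_k}^0$. The greedy rule only controls candidates that are never picked. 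For $\ell=i_m$ you get only $(v_\ell^0)^\top(H_n^0)^{-1}v_\ell^0\le x_m/(1+\alpha_0x_m)\le 1/\alpha_0$, and this does not decay in $n$. For example, a lone candidate in a direction nothing else spans keeps constant leverage while later greedy steps fill other directions. Since $\mathcal{E}_{\mathrm{rel}}$ maximizes over all of $\mathcal C$, ``$1/\alpha_0$ dominates'' destroys the rate. This is exactly what Assumption~\ref{assump:diverse} is for. It is imposed for every $i\in[N]$ at every step, so $(v_i^0)^\top(H_{k-1}^0)^{-1}v_i^0\le\kappa x_k$ for selected and unselected $i$ alike, hence $(v_i^0)^\top(H_n^0)^{-1}v_i^0\le\frac{\kappa}{n}\sum_k x_k$. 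That is where $\kappa$ enters. It has nothing to do with passing from the reference design to the true curvature, which is $\rho$'s job alone. In your $H_n^0$-geometry the leverage factor is $\rho B_n$, and $\rho$ comes back through the estimation term.

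Second, your estimation term departs from the stated hypotheses. The paper runs Cauchy--Schwarz in the $\Sigma_n$-norm and uses $\Sigma_n\succeq\rho H_n^0$, at $\theta_*$ only, to convert the leverage. It then imports $\|\hat\theta_n-\theta_*\|_{\Sigma_n}\le C\sqrt{d\log(1/\delta)}$ directly from the GLM self-normalized bound. Your route goes through Theorem~\ref{app:1-parameter-estimation} with $\Sigma_0=\rho H_n^0$, which needs $\bar\Sigma_n\succeq\rho H_n^0$ along the whole segment. Assumption~\ref{assump:fisher-compat} does not give this, and Lemma~\ref{lemma:hessian} cannot supply it: its covariance term uses subset weights and the subset mean at $\theta$, and vanishes when $n=1$. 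The $\eta_n$ discrepancy bound is likewise not a stated hypothesis. To stay within the assumptions, write $\|\hat\theta_n-\theta_*\|_{H_n^0}\le\rho^{-1/2}\|\hat\theta_n-\theta_*\|_{\Sigma_n}$ and use the same concentration input as the paper. Your observation that the discrepancy is essentially deterministic is apt, since the paper only cites this step. If you do accept the segment-wise condition as an extra assumption, you need neither the martingale inequality nor $\eta_n$. By Lemma~\ref{lemma:gradient} and Assumption~\ref{assump:boundedness}, $\|\nabla L(\theta_*;S_n)-\nabla L(\theta_*;\mathcal C)\|_2\le 2\beta L_\phi$. Hence $\|\hat\theta_n-\theta_*\|_{H_n^0}\le 2\beta L_\phi/(\rho\sqrt{\gamma})$ and $\mathcal{E}_{\mathrm{rel}}\le 4\beta L_\phi\sqrt{B_n/(\rho\gamma)}=\widetilde O(\sqrt{d/n})$ with probability one. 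That implies the stated bound for $\delta\le e^{-1}$.
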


\begin{proof}
Treating the multi-negative loss as a generalized linear model and applying self-normalized concentration to the regularized estimator~\citep{abbasi2011improved,kveton2025active}, with probability at least $1-\delta$,
\[
\|\hat\theta_n-\theta_*\|_{\Sigma_n}\;\leq\;C\sqrt{d\log(1/\delta)},
\]
for a constant $C$ depending on $\beta,c_{\min},c_{\max},\gamma$ from Assumptions~\ref{assump:boundedness}--\ref{assump:design-matrix}.
For any $i,j\in\mathcal C$, since $\phi_i-\phi_j=\tilde\phi_i^0-\tilde\phi_j^0$, Cauchy--Schwarz gives
\[
|(\phi_i-\phi_j)^\top(\hat\theta_n - \theta_*)|
\leq
\bigl(\|\tilde\phi_i^0\|_{\Sigma_n^{-1}}+\|\tilde\phi_j^0\|_{\Sigma_n^{-1}}\bigr)
\|\hat\theta_n - \theta_*\|_{\Sigma_n}.
\]
By~\Cref{thm:leverage} under Assumption~\ref{assump:fisher-compat}, $(\tilde\phi_i^0)^\top\Sigma_n^{-1}\tilde\phi_i^0\le B_n$, so each leverage term is at most $\sqrt{B_n}$. Combining,
\[
\mathcal E_{\mathrm{rel}}\;\leq\;2\sqrt{B_n}\,\|\hat\theta_n-\theta_*\|_{\Sigma_n}\;\leq\;\widetilde O\!\left(d\sqrt{\log(1/\delta)/n}\right),
\]
where the candidate-pool constants are absorbed into $\widetilde O$.
\end{proof}

\section{Technical Assumptions}
\label{app:assumptions}

\subsection{Assumption Details}
\begin{assumption}[Bounded Feature Differences and Bias]
\label{assump:boundedness}
For each prompt and candidate set, the feature-difference vectors used in the loss,
$\phi_i=\phi(x,y_i)-\phi(x,y^*)$, and the reference-policy offsets are bounded:
\[
\|\phi_i\|_2 \leq L_\phi, \quad |b_i| \leq L_b.
\]
For the theoretical analysis, the ridge-regularized objectives are optimized over $\mathbb R^d$; we assume the relevant minimizers lie in a ball $\|\theta_*\|_2,\|\hat\theta_n\|_2\le R_\theta$. The bounds hide polynomial dependence on the finite constants $L_\phi$ and $L_b$.
\end{assumption}

\begin{assumption}[Bounded Curvature Scale on the Relevant Region]
\label{assump:design-matrix}
Let $Z_\mathcal C(\theta)=-\log\sum_{j\in\mathcal C}\exp[\beta(\phi_j^\top\theta+b_j)]$. On the compact parameter region containing $\theta_0$, $\theta_*$, $\hat\theta_n$, and the line segments used in the perturbation arguments of Appendix~A, there exist constants $0 < c_{\min} \leq c_{\max} \leq \beta^2$ such that, for all $\theta$ in this region,
\[
c_{\min} \leq \beta^2 (1 - \sigma(Z_\mathcal C(\theta))) \leq c_{\max}.
\]
\end{assumption}

\begin{assumption}[Diverse Candidate Set]
\label{assump:diverse}
Let $q_j^0$ and $\bar\phi_0$ be the full-pool softmax weights and center computed by Algorithm~\ref{alg:dopt} at preprocessing parameter $\theta_0$, and write $Z_{\mathcal C}^0=Z_\mathcal C(\theta_0)$. Let $v_j^0=\sqrt{q_j^0}(\phi_j-\bar\phi_0)$ be the fixed full-pool-centered Fisher contribution used by Algorithm~\ref{alg:dopt}, and let $\alpha_0=\beta^2(1-\sigma(Z_{\mathcal C}^0))$. There exists a constant $\kappa \geq 1$ such that for any selected index set $I_k$ produced by Algorithm~\ref{alg:dopt} after $k$ steps ($k=0,1,\ldots,n-1$, with $I_0=\emptyset$), with design matrix $H_k^0 = \gamma I + \alpha_0\sum_{j\in I_k} v_j^0(v_j^0)^\top$, we have
\[
(v_i^0)^\top (H_k^0)^{-1} v_i^0 \;\leq\; \kappa \cdot \max_{j \in [N] \setminus I_k}\; (v_j^0)^\top (H_k^0)^{-1} v_j^0, \quad \forall\, i \in [N],\; \forall\, k \in \{0,1,\ldots,n-1\}.
\]
\end{assumption}

Here $\kappa$ measures how well the remaining pool continues to cover high-leverage directions after each greedy step; $\kappa=1$ in the variant allowing reselection.

\begin{assumption}[Fisher-Compatibility]
\label{assump:fisher-compat}
There exists $\rho\in(0,1]$ such that the regularized subset Hessian dominates the full-pool selection objective up to a constant:
\[
\Sigma_n := \gamma I + \nabla^2 L(\theta_*; S_n) \;\succeq\; \rho\, H_n^0,
\]
where $H_n^0=\gamma I+\alpha_0\sum_{i\in I_n}v_i^0(v_i^0)^\top$ is the selection objective of~\Cref{eq:fis} evaluated on the selected index set.
\end{assumption}

\subsection{Centered Leverage Score Bound}
\label{app:leverage}

The following result shows that the centered leverage scores decay at rate $\tilde O(d/(\rho q_{\min}^0 n))$ under Assumptions~\ref{assump:boundedness}--\ref{assump:fisher-compat}. This is the key ingredient linking the $\Sigma_n$-norm estimation error to the relative logit error in~\Cref{thm:logit-error}.

\begin{thm}[Centered Leverage Score Decay]
\label{thm:leverage}
Let $\tilde\phi_i^0=\phi_i-\bar\phi_0$, $v_i^0=\sqrt{q_i^0}\tilde\phi_i^0$, and $H_k^0=\gamma I+\alpha_0\sum_{t=1}^k v_{i_t}^0(v_{i_t}^0)^\top$, where $\alpha_0=\beta^2(1-\sigma(Z_{\mathcal C}^0))$ and $Z_{\mathcal C}^0=Z_\mathcal C(\theta_0)$. Let $q_{\min}^0=\min_{i\in\mathcal C}q_i^0$ and $L_v^0=\max_{i\in\mathcal C}\|v_i^0\|_2^2$. Under Assumptions~\ref{assump:boundedness}--\ref{assump:fisher-compat}, for any candidate $i \in \mathcal C$ and any subset $S_n$ of size $n$ produced by Algorithm~\ref{alg:dopt},
\[
(\phi_i-\bar\phi_0)^\top \Sigma_n^{-1}(\phi_i-\bar\phi_0)
\;\leq\;
\frac{\kappa}{\rho q_{\min}^0}\cdot
\frac{1+\alpha_0 L_v^0/\gamma}{\alpha_0}\cdot
\frac{d}{n}\log\!\Bigl(1+\frac{\alpha_0 n L_v^0}{\gamma d}\Bigr).
\]
\end{thm}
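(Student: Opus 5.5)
We reduce everything to a statement about the fixed design matrix $H_n^0$ and then run an elliptical-potential argument on the greedy sequence. First, by Assumption~\ref{assump:fisher-compat}, $\Sigma_n\succeq\rho H_n^0$, so $\Sigma_n^{-1}\preceq\rho^{-1}(H_n^0)^{-1}$. Second, since $\tilde\phi_i^0=v_i^0/\sqrt{q_i^0}$ and $q_i^0\ge q_{\min}^0$,
\[
(\tilde\phi_i^0)^\top\Sigma_n^{-1}\tilde\phi_i^0\;\le\;\frac{1}{\rho\,q_{\min}^0}\,(v_i^0)^\top (H_n^0)^{-1}v_i^0 .
\]
It therefore suffices to show, for every $i\in\mathcal C$,
\[
(v_i^0)^\top (H_n^0)^{-1}v_i^0\;\le\;\kappa\,\frac{1+\alpha_0L_v^0/\gamma}{\alpha_0}\,\frac{d}{n}\log\!\Bigl(1+\frac{\alpha_0nL_v^0}{\gamma d}\Bigr).
\]

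Write $w_k=\max_{j\notin I_{k-1}}(v_j^0)^\top(H_{k-1}^0)^{-1}v_j^0$. This is exactly the score of the greedy pick $i_k$, by the equivalent selection rule derived from \eqref{eq:mdl_incremental}. Since $H_k^0\succeq H_{k-1}^0$, the inverses are monotone: $(H_n^0)^{-1}\preceq (H_{k-1}^0)^{-1}$ for every $k\le n$. Combining this with Assumption~\ref{assump:diverse} at step $k-1$ gives, for any $i$,
\[
(v_i^0)^\top(H_n^0)^{-1}v_i^0\le (v_i^0)^\top(H_{k-1}^0)^{-1}v_i^0\le\kappa w_k .
\]
Averaging over $k=1,\dots,n$ yields
\[
(v_i^0)^\top(H_n^0)^{-1}v_i^0\le \frac{\kappa}{n}\sum_{k=1}^n w_k .
\]
This step, moving from the final matrix back to the whole greedy trajectory, is what lets a worst-case bound over all candidates be converted into an average over the selected ones.

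We then bound $\sum_k w_k$ with the elliptical potential lemma.
\begin{itemize}
\item By the matrix determinant lemma, $\log\det H_n^0-\log\det(\gamma I)=\sum_k\log(1+\alpha_0w_k)$.
\item Each term satisfies $\alpha_0w_k\le\alpha_0\|v_{i_k}^0\|^2/\gamma\le\alpha_0L_v^0/\gamma=:c$, because $H_{k-1}^0\succeq\gamma I$.
\item Concavity of the logarithm gives $x\le\frac{c}{\log(1+c)}\log(1+x)$ on $[0,c]$. We use the cruder bound $x\le(1+c)\log(1+x)$, which holds for $x\le c$ since $\log(1+x)\ge x/(1+x)\ge x/(1+c)$.
\item Hence $\alpha_0\sum_k w_k\le(1+c)\log\frac{\det H_n^0}{\det\gamma I}$.
\item AM--GM on the eigenvalues, with $\operatorname{tr}H_n^0\le\gamma d+\alpha_0nL_v^0$, gives $\log\frac{\det H_n^0}{\det \gamma I}\le d\log\!\bigl(1+\frac{\alpha_0nL_v^0}{\gamma d}\bigr)$.
\end{itemize}
Dividing by $\alpha_0n$ and multiplying by $\kappa$ gives the displayed sufficient bound. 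Multiplying by $1/(\rho q_{\min}^0)$ then gives the statement.

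We expect the main obstacle to be the second step: legitimately applying the diversity condition at every intermediate $k$ and the monotonicity of inverses together. We must make sure Assumption~\ref{assump:diverse} covers all $i\in\mathcal C$, including already-selected indices, which it does as stated. The rest is the standard potential computation. We should also check that the fixed-point quantities $\alpha_0$ and $q^0$ are used consistently, rather than $\theta_*$-dependent weights; Assumption~\ref{assump:fisher-compat} absorbs that mismatch.
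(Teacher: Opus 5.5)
Your proposal is correct and follows essentially the same route as the paper. You use $\Sigma_n\succeq\rho H_n^0$ with $q_i^0\ge q_{\min}^0$, Assumption~\ref{assump:diverse} combined with monotonicity of $(H_k^0)^{-1}$ to get $\frac{\kappa}{n}\sum_k w_k$, then the matrix determinant lemma, $\log(1+x)\ge x/(1+c)$, and the determinant--trace (AM--GM) bound; the only differences are cosmetic (you apply Fisher-compatibility first rather than last, and average directly instead of passing through $\min_k$).
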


\begin{proof}
Let
\[
x_k = (v_{i_k}^0)^\top (H_{k-1}^0)^{-1}v_{i_k}^0.
\]
By the max-marginal construction in Algorithm~\ref{alg:dopt}, $x_k=\max_{j\notin I_{k-1}}(v_j^0)^\top (H_{k-1}^0)^{-1}v_j^0$. By Assumption~\ref{assump:diverse}, for any $i\in\mathcal C$,
\[
(v_i^0)^\top (H_{k-1}^0)^{-1}v_i^0\leq \kappa x_k.
\]
Since $H_n^0\succeq H_{k-1}^0$, we also have $(H_n^0)^{-1}\preceq (H_{k-1}^0)^{-1}$, and therefore
\[
(v_i^0)^\top (H_n^0)^{-1}v_i^0\leq \kappa x_k,\qquad \forall k\in[n].
\]
Thus $(v_i^0)^\top (H_n^0)^{-1}v_i^0\leq \kappa\min_k x_k\leq \frac{\kappa}{n}\sum_{k=1}^n x_k$.

It remains to bound $\sum_k x_k$. The matrix determinant lemma gives
\[
\log\det H_k^0-\log\det H_{k-1}^0
= \log(1+\alpha_0 x_k).
\]
Because $H_{k-1}^0\succeq \gamma I$ and $\|v_i^0\|_2^2\leq L_v^0$, we have $x_k\leq L_v^0/\gamma$. Hence
\[
\log(1+\alpha_0 x_k)
\geq
\frac{\alpha_0 x_k}{1+\alpha_0 L_v^0/\gamma},
\]
and
\[
\sum_{k=1}^n x_k
\leq
\frac{1+\alpha_0 L_v^0/\gamma}{\alpha_0}\,
\bigl(\log\det H_n^0-\log\det H_0^0\bigr).
\]
Using the determinant upper bound under a fixed trace,
\[
\log\det H_n^0-\log\det H_0^0
\leq
d\log\!\Bigl(1+\frac{\alpha_0\sum_{k=1}^n\|v_{i_k}^0\|_2^2}{\gamma d}\Bigr)
\leq
d\log\!\Bigl(1+\frac{\alpha_0 n L_v^0}{\gamma d}\Bigr).
\]
Combining these inequalities yields
\[
(v_i^0)^\top (H_n^0)^{-1}v_i^0
\leq
\kappa\frac{1+\alpha_0 L_v^0/\gamma}{\alpha_0}\,
\frac{d}{n}\log\!\Bigl(1+\frac{\alpha_0 n L_v^0}{\gamma d}\Bigr).
\]
Finally, since $\tilde\phi_i^0=v_i^0/\sqrt{q_i^0}$ and $\Sigma_n\succeq \rho H_n^0$ by Assumption~\ref{assump:fisher-compat},
\[
(\tilde\phi_i^0)^\top\Sigma_n^{-1}\tilde\phi_i^0
\leq
\frac{1}{\rho q_i^0}(v_i^0)^\top (H_n^0)^{-1}v_i^0
\leq
\frac{1}{\rho q_{\min}^0}(v_i^0)^\top (H_n^0)^{-1}v_i^0,
\]
which proves the claimed bound. When $q_{\min}^0$ is treated as a fixed candidate-pool regularity constant, the displayed expression simplifies to $\tilde O(d/(\rho n))$.
\end{proof}

\subsection{Experimental Settings} \label{app:exp}

To further manage computational costs, we cap the number of response candidates at 20 for the LastFM, MovieLens, and MedMCQA datasets, and at 8 for QASC. Although MedMCQA natively provides only four options per question, we expand this to 20 by pooling all candidates with the same \texttt{subject\_name} field. We also subsample each dataset to 20k training samples, 200 samples for online evaluation, and 2,000 samples for testing. All prompts are formatted using each model’s provided chat template to ensure consistent input structure across tasks.

\begin{figure*}[htp]
\centering
\includegraphics[width=0.85\linewidth]{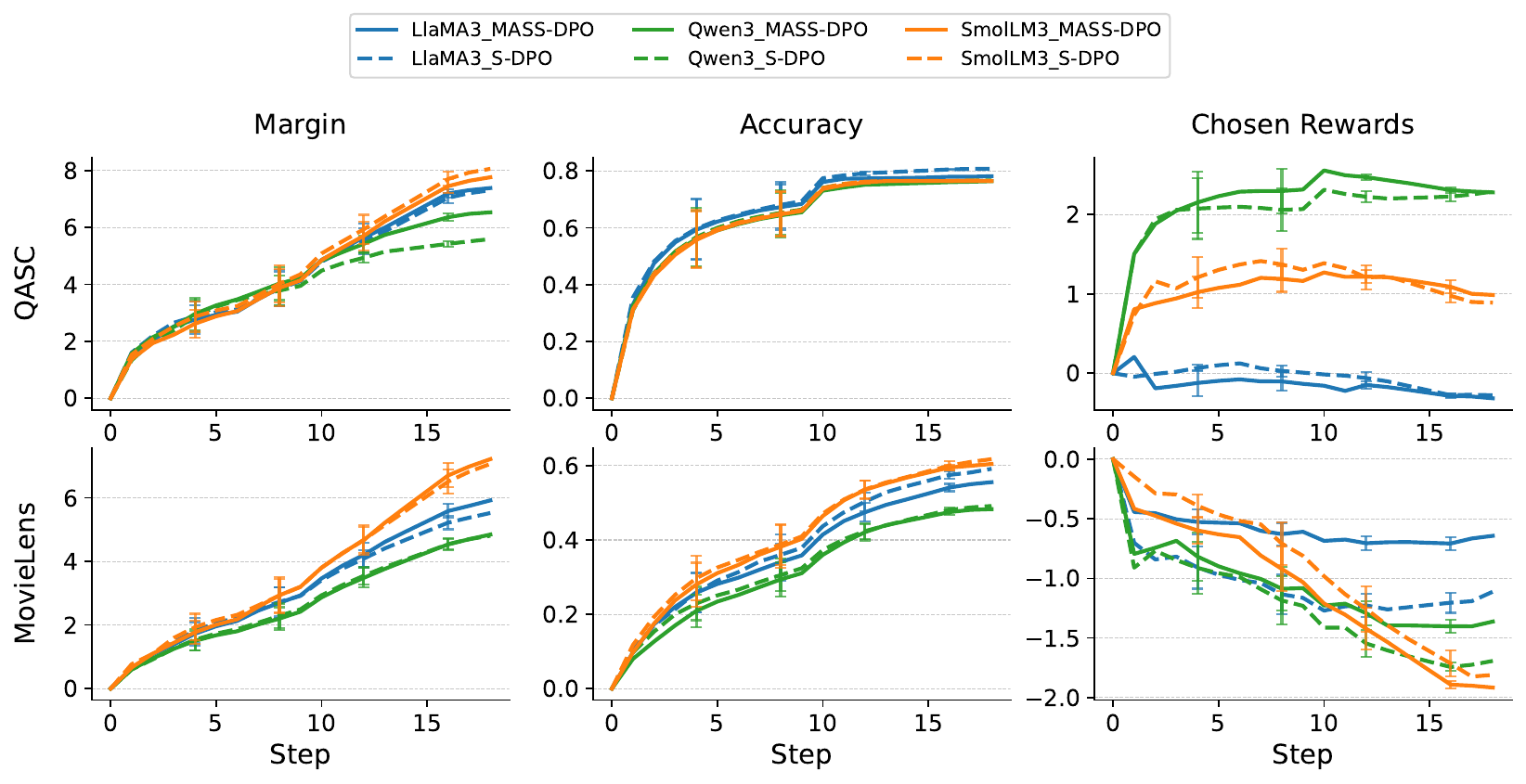}
\caption{Margin, accuracy, and chosen reward comparisons on MovieLens and QASC datasets. MASS-DPO achieves higher margins, superior accuracy, and more stable chosen rewards than S-DPO. The x-axis (Step) counts evaluations during training.}
\label{fig:qasc_movielens}
\end{figure*}

\paragraph{Scope.}\label{app:scope} Our evaluation is scoped to settings with a finite per-prompt candidate pool: recommendation (LastFM, MovieLens) and multiple-choice QA (MedMCQA, QASC), where negatives are well-defined and bounded in number. Extending MASS-DPO to open-ended generation or instruction tuning would require an upstream pool-construction step (e.g., sampling negatives from the reference policy), which we do not evaluate here.

\subsection{Implementation Details}
\label{app:imp}

We implement our experiments using PyTorch, leveraging three widely used pre-trained LLMs: \textit{Llama-3.2-3B-Instruct}~\citep{grattafiori2024llama}, \textit{SmolLM3}~\citep{bakouch2025smollm3}, and \textit{Qwen3-4B}~\citep{qwen3}. Each model undergoes full fine-tuning on 8 NVIDIA A100 GPUs with a per-device batch size of 2, gradient accumulation steps of 8, learning rate of $10^{-5}$, a cosine learning-rate scheduler with warmup ratio $0.05$, and the Paged AdamW optimizer for 3 epochs with a fixed DPO scale $\beta = 0.1$ across all main experiments.
We enable gradient checkpointing, gradient clipping is applied with a maximum norm of $0.3$, and evaluation uses a batch size of 2. We extract representation vectors by mean-pooling the final hidden states, using either (a) all tokens from the concatenated prompt–response sequence or (b) only the response tokens, where prompt positions are masked out. Both strategies use the same pretrained LLM and tokenization pipeline.
We compute the negative subset $S_n$ during dataset preprocessing, using the frozen preprocessing checkpoint $\theta_0$ to obtain embeddings and log-probabilities; the selected subsets remain fixed throughout training. The preprocessing score is
\[
s_i=\beta\bigl(\phi_i^\top\theta_0+b_i\bigr),
\qquad
b_i=\log\pi_{\mathrm{ref}}(y^*\mid x)-\log\pi_{\mathrm{ref}}(y_i\mid x),
\]
where $\beta$ is the same value used in the DPO/PL training loss and $\theta_0$ is the pretrained initialization before preference fine-tuning.

\paragraph{Stability of fixed selection across training.}
To assess whether the pre-selected negatives remain informative as training progresses, we recompute the selection at initialization, mid-training (epoch~1), and the final checkpoint on a 500-sample subset of MovieLens with Llama-3.2-3B-Instruct for selected-negative budgets $n\in\{5,10\}$. Across both budgets, the subset obtained at initialization and the one recomputed at the final checkpoint share at least 98.8\% exact overlap and 0.997 mean Jaccard similarity (\Cref{tab:selection-stability}); the self log-det values vary by at most 1\% across checkpoints. These results indicate that the Fisher geometry is stable over the training horizon and dynamic re-selection would recover essentially the same subset.

\begin{table}[h]
\centering
\caption{Stability of fixed MASS-DPO selection across training checkpoints on MovieLens with Llama-3.2-3B-Instruct. Metrics compare the initialization-time subset against the subset recomputed at the final checkpoint.}
\label{tab:selection-stability}
\small
\begin{tabular}{lccccc}
\toprule
$n$ & Exact match & Mean Jaccard & Top-1 match & Top-3 exact match & Obj. retention \\
\midrule
5 & 99.2\% & 0.9969 & 100\% & 99.8\% & 0.9994 \\
10 & 98.8\% & 0.9970 & 100\% & 99.6\% & 0.9990 \\
\bottomrule
\end{tabular}
\end{table}

For the main comparison, we fix $n{=}3$ negatives for all multi-negative methods, DPO-k, DMPO, S-DPO, and MASS-DPO, so these methods use the same number of negative responses per prompt. DPO uses a single negative by construction. MASS-DPO incurs only the additional selection overhead of~\Cref{alg:dopt}, which is amortized as a one-time preprocessing cost.

\paragraph{Hyperparameters.}
For the D-optimal selection objective (\Cref{eq:fis}), we set the ridge $\gamma=0.1$ for all runs to ensure $H(S)$ is well conditioned, and we use the same $\beta$ as in training. In the main experiments $\beta=0.1$, and we only vary $\beta$ in the ablation.

\subsection{Results}
\label{app:results}

\begin{table*}[ht]
\centering
\scriptsize
\caption{Recall (R) and NDCG (N) at k=\{1,3\} on MedMCQA and QASC. Each entry reports metric$_{\text{SE}}$, where the subscript denotes standard error.}
\setlength{\tabcolsep}{5pt}
\begin{tabular}{l l | cccc | cccc}
\toprule
\multirow{2}{*}{Model} & \multirow{2}{*}{Method} & \multicolumn{4}{c|}{MedMCQA} & \multicolumn{4}{c}{QASC} \\
& & R@1 & R@3 & N@1 & N@3 & R@1 & R@3 & N@1 & N@3 \\
\midrule
\multirow{5}{*}{Qwen3}
& DPO      & 39.25\se{1.09} & 84.51\se{0.81} & 39.25\se{1.09} & 65.37\se{0.77} & 67.77\se{1.55} & 90.62\se{0.97} & 67.77\se{1.55} & 81.29\se{1.04} \\
& DMPO     & 26.02\se{0.98} & 74.89\se{0.97} & 26.02\se{0.98} & 53.60\se{0.81} & 68.21\se{1.55} & 90.51\se{0.97} & 68.21\se{1.55} & 81.40\se{1.04} \\
& DPO-k    & 54.59\se{1.11} & 89.67\se{0.68} & 54.59\se{1.11} & 74.86\se{0.72} & 71.08\se{1.51} & \textbf{92.72}\se{0.86} & 71.08\se{1.51} & \textbf{83.95}\se{0.96} \\
& S-DPO    & 51.03\se{1.12} & 86.52\se{0.76} & 51.03\se{1.12} & 71.54\se{0.77} & 70.42\se{1.52} & 91.50\se{0.93} & 70.42\se{1.52} & 83.04\se{1.00} \\
& MASS-DPO & \textbf{56.34}\se{1.11} & \textbf{89.72}\se{0.68} & \textbf{56.34}\se{1.11} & \textbf{75.62}\se{0.72} & \textbf{71.85}\se{1.49} & 91.61\se{0.92} & \textbf{71.85}\se{1.49} & 83.73\se{0.99} \\
\midrule
\multirow{5}{*}{SmolLM3}
& DPO      & 33.33\se{1.06} & 81.75\se{0.86} & 33.33\se{1.06} & 61.01\se{0.78} & 67.11\se{1.56} & 90.07\se{0.99} & 67.11\se{1.56} & 80.70\se{1.06} \\
& DMPO     & 26.02\se{0.98} & 75.39\se{0.96} & 26.02\se{0.98} & 53.93\se{0.80} & 66.11\se{1.57} & 88.74\se{1.05} & 66.11\se{1.57} & 79.54\se{1.10} \\
& DPO-k    & 44.16\se{1.11} & 85.91\se{0.78} & 44.16\se{1.11} & 68.09\se{0.77} & 70.31\se{1.52} & 90.18\se{0.99} & 70.31\se{1.52} & 82.06\se{1.05} \\
& S-DPO    & \textbf{45.46}\se{1.11} & 87.22\se{0.75} & \textbf{45.46}\se{1.11} & \textbf{69.54}\se{0.75} & 70.53\se{1.51} & 91.39\se{0.93} & 70.53\se{1.51} & 82.80\se{1.01} \\
& MASS-DPO & 44.81\se{1.11} & \textbf{87.47}\se{0.74} & 44.81\se{1.11} & 69.40\se{0.74} & \textbf{72.52}\se{1.48} & \textbf{91.83}\se{0.91} & \textbf{72.52}\se{1.48} & \textbf{83.82}\se{0.99} \\
\midrule
\multirow{5}{*}{Llama3}
& DPO      & 51.48\se{1.12} & 87.82\se{0.73} & 51.48\se{1.12} & 72.30\se{0.75} & 71.30\se{1.50} & 91.72\se{0.92} & 71.30\se{1.50} & 83.41\se{1.00} \\
& DMPO     & 24.36\se{0.96} & 74.99\se{0.97} & 24.36\se{0.96} & 52.84\se{0.80} & 69.32\se{1.53} & 91.94\se{0.90} & 69.32\se{1.53} & 82.67\se{0.99} \\
& DPO-k    & 71.13\se{1.01} & 93.88\se{0.54} & 71.13\se{1.01} & 84.51\se{0.62} & 73.95\se{1.46} & 92.38\se{0.88} & 73.95\se{1.46} & 84.93\se{0.96} \\
& S-DPO    & \textbf{72.33}\se{1.00} & 94.34\se{0.52} & \textbf{72.33}\se{1.00} & \textbf{85.20}\se{0.61} & \textbf{74.17}\se{1.45} & 92.38\se{0.88} & \textbf{74.17}\se{1.45} & \textbf{85.13}\se{0.96} \\
& MASS-DPO & 71.23\se{1.01} & \textbf{94.49}\se{0.51} & 71.23\se{1.01} & 84.84\se{0.61} & 73.84\se{1.46} & \textbf{92.60}\se{0.87} & 73.84\se{1.46} & \textbf{85.13}\se{0.95} \\
\bottomrule
\end{tabular}
\label{tab:medmcqa_qasc_allmethods}
\end{table*}

\begin{table*}[htp]
\centering
\caption{MRR and Margin across four datasets. Each cell shows \emph{MRR / Margin}.}
\small
\begin{tabular}{l l c c c c | c}
\toprule
\multirow{1}{*}{Model} &
\multirow{1}{*}{Method} &
\multicolumn{1}{c}{MedMCQA} &
\multicolumn{1}{c}{QASC} &
\multicolumn{1}{c}{LastFM} &
\multicolumn{1}{c}{MovieLens} &
\multicolumn{1}{|c}{{Average$\uparrow$}} \\
& & \multicolumn{1}{c}{MRR/Margin} & \multicolumn{1}{c}{MRR/Margin} &
\multicolumn{1}{c}{MRR/Margin} & \multicolumn{1}{c}{MRR/Margin} &
\multicolumn{1}{|c}{MRR/Margin} \\
\midrule

\multirow{2}{*}{Qwen3}
& S\mbox{-}DPO
& 69.74 / 9.33
& 81.87 / 5.29
& 64.12 / 4.67
& {61.29} / \textbf{4.98}
& {69.26} / 6.07 \\
& MASS\mbox{-}DPO
& \textbf{73.30} / \textbf{11.76}
& \textbf{82.71} / \textbf{6.22}
& \textbf{66.28} / \textbf{5.51}
& \textbf{61.61} / 4.94
& \textbf{70.97} / \textbf{7.11} \\
\midrule

\multirow{2}{*}{SmolLM3}
& S\mbox{-}DPO
& \textbf{66.64} / \textbf{19.10}
& {81.64} / \textbf{8.07}
& 70.13 / \textbf{7.62}
& \textbf{68.73} / 7.32
& {71.78} / {10.53} \\
& MASS\mbox{-}DPO
& {66.30} / 14.42
& \textbf{82.73} / 7.91
& \textbf{70.79} / 7.29
& {68.13} / \textbf{7.41}
& \textbf{71.99} / 9.26 \\
\midrule

\multirow{2}{*}{Llama3}
& S\mbox{-}DPO
& \textbf{83.44} / \textbf{23.93}
& \textbf{84.13} / \textbf{7.26}
& {70.58} / 6.42
& {63.92} / {5.20}
& {75.52} / \textbf{10.70} \\
& MASS\mbox{-}DPO
& {82.86} / {21.36}
& {84.06} / {7.24}
& \textbf{70.71} / \textbf{6.53}
& \textbf{65.58} / \textbf{5.75}
& \textbf{75.80} / {10.22} \\
\bottomrule
\end{tabular}
\label{tab:mrr-margin-all}
\end{table*}

\subsection{Computational Cost and Full-Pool Comparison}
\label{app:compute}

To quantify the computational trade-off between active selection and full-pool training, we measure wall-clock times on MovieLens with Llama-3.2-3B using 4$\times$H100 GPUs.

\begin{table}[h]
\centering
\caption{Wall-clock cost breakdown. MASS-DPO selection is a one-time preprocessing step; per-epoch training cost scales with the number of negatives.}
\label{tab:wallclock}
\small
\begin{tabular}{lc}
\toprule
Component & Wall-clock \\
\midrule
\quad Feature \& log-prob extraction & 42.0 min \\
\quad D-optimal subset selection (\Cref{alg:dopt}) & 3.5 min \\
MASS-DPO selection total (one-time) & 45.5 min \\
\midrule
MASS-DPO-5 training (per epoch) & 5.52 h \\
S-DPO-19 training (per epoch) & 10.21 h \\
\bottomrule
\end{tabular}
\end{table}

Training with the full negative pool (S-DPO-19) is 1.85$\times$ slower per epoch than MASS-DPO with $n{=}5$ selected negatives. The one-time selection cost of 45.5 min is dominated by feature and log-probability extraction (42 min); the core selection step (\Cref{alg:dopt}) itself takes only 3.5 min via rank-one updates. Over two epochs, S-DPO-19 incurs ${\sim}9.4$ extra GPU-hours relative to MASS-DPO-5, far exceeding the entire selection cost.

\Cref{tab:fullpool} compares MASS-DPO ($n{=}5$) at epoch 2 against S-DPO-all ($n{=}19$) at epoch 1 on MovieLens with Llama-3.2-3B at matched wall-clock budgets (${\sim}11$h vs.\ ${\sim}10.2$h).

\begin{table}[h]
\centering
\caption{MASS-DPO-5 vs.\ S-DPO-all at matched wall-clock budget on MovieLens + Llama-3.2-3B.}
\label{tab:fullpool}
\scriptsize
\begin{tabular}{llcccccccccc}
\toprule
Method & Epochs & Runtime (h) & Acc & R@1 & R@3 & R@5 & NDCG@1 & NDCG@3 & NDCG@5 & MRR & Margin \\
\midrule
MASS-DPO ($n{=}5$) & 2 & 11.03 & 0.604 & 0.606 & 0.810 & 0.870 & 0.606 & 0.726 & 0.751 & 0.725 & 5.049 \\
S-DPO-all (19 neg)  & 1 & 10.21 & 0.605 & 0.613 & 0.781 & 0.853 & 0.613 & 0.711 & 0.740 & 0.718 & 3.904 \\
\bottomrule
\end{tabular}
\end{table}

At matched compute, MASS-DPO-5 matches S-DPO-all on accuracy while outperforming it on R@3, R@5, NDCG@3, NDCG@5, and MRR, indicating that actively selected negatives provide more useful training signal per gradient step than the full pool.

\end{document}